\documentclass{article}
\usepackage{iclr2027_conference,times}

\usepackage{amsmath,amsfonts,bm}

\def\eqref#1{equation~\ref{#1}}

\def\1{\bm{1}}

\DeclareMathAlphabet{\mathsfit}{\encodingdefault}{\sfdefault}{m}{sl}
\SetMathAlphabet{\mathsfit}{bold}{\encodingdefault}{\sfdefault}{bx}{n}

\newcommand{\E}{\mathbb{E}}

\newcommand{\R}{\mathbb{R}}

\newcommand{\KL}{D_{\mathrm{KL}}}

\DeclareMathOperator*{\argmax}{arg\,max}

\usepackage{amsmath,amssymb,amsthm,mathtools}
\usepackage{xcolor}
\usepackage{enumitem}
\usepackage{graphicx}
\usepackage{booktabs}
\usepackage{arydshln}
\usepackage{algorithm}
\usepackage[noend]{algpseudocode}
\usepackage{float}
\usepackage{wrapfig}
\usepackage{capt-of}
\usepackage{placeins}
\usepackage{hyperref}
\usepackage{fontawesome5}
\usepackage{url}
\usepackage[capitalize,nameinlink]{cleveref}
\definecolor{scoredark}{HTML}{8B0000}  %
\definecolor{scorebg}{HTML}{8B0000}
\definecolor{darkgreen}{rgb}{0,0.5,0}
\definecolor{darkred}{rgb}{0.7,0,0}
\definecolor{teal}{rgb}{0.1,0.6,0.7}
\definecolor{blue}{rgb}{0.0,0.1,0.9}
\definecolor{orange}{rgb}{1.,0.7,0.0}
\definecolor{palegreen}{rgb}{0.7,0.7,0.0}
\definecolor{lightblue}{rgb}{0.70, 0.80, 0.89}
\definecolor{violet}{rgb}{0.50, 0.16, 0.88}
\definecolor{babyblue}{rgb}{0.00, 0.88, 0.88}
\definecolor{electricpurple}{rgb}{0.75, 0.0, 1.0}

\newcount\Comments  %
\newcommand{\kibitz}[2]{\ifnum\Comments=1{{\textcolor{#1}{\textsf{\footnotesize [#2]}}}}\fi}

\usepackage{aliascnt}
\newcommand{\newaliastheorem}[3]{%
  \newaliascnt{#1}{theorem}%
  \newtheorem{#1}[#1]{#2}%
  \aliascntresetthe{#1}%
  \crefname{#1}{#2}{#3}%
  \Crefname{#1}{#2}{#3}%
}

\theoremstyle{plain}

\crefname{theorem}{Theorem}{Theorems}
\Crefname{theorem}{Theorem}{Theorems}
\crefname{algorithm}{Algorithm}{Algorithms}
\Crefname{algorithm}{Algorithm}{Algorithms}
\newaliastheorem{lemma}{Lemma}{Lemmas}
\newaliastheorem{proposition}{Proposition}{Propositions}
\newaliastheorem{corollary}{Corollary}{Corollaries}
\newaliastheorem{counterexample}{Counterexample}{Counterexamples}
\theoremstyle{definition}
\newaliastheorem{definition}{Definition}{Definitions}
\newaliastheorem{assumption}{Assumption}{Assumptions}
\theoremstyle{remark}
\newaliastheorem{remark}{Remark}{Remarks}

\usepackage[most]{tcolorbox}
\newtcolorbox{takeaway}[1]{
    enhanced,
    breakable,
    colback=black!3!yellow!8,
    colframe=black,
    boxrule=0.8pt,
    arc=2.5pt,
    outer arc=2.5pt,
    left=7pt,
    right=7pt,
    top=8pt,
    bottom=6pt,
    before skip=4pt,
    after skip=4pt,
    fonttitle=\small\bfseries,
    coltitle=white,
    title={#1},
    attach boxed title to top left={
        xshift=10pt,
        yshift=-6pt
    },
    boxed title style={
        colback=black,
        colframe=black,
        boxrule=0pt,
        arc=2pt,
        outer arc=2pt,
        left=4pt,
        right=4pt,
        top=2pt,
        bottom=1pt,
    },
    fontupper=\small,
}

\newcommand{\TV}{\mathrm{TV}}

\newcommand{\diag}{\operatorname{diag}}
\newcommand{\PhiMap}{\Phi}                    %

\newcommand{\Vocab}{\mathcal{V}}              %
\newcommand{\eos}{\texttt{[EOS]}}
\newcommand{\Bdom}{\mathcal{B}}               %

\newcommand{\norm}[1]{\left\lVert #1 \right\rVert}
\newcommand{\Reach}{\mathcal{R}}
\newcommand{\diam}{\operatorname{diam}}
\newcommand{\tr}{\operatorname{tr}}

\newcommand{\epsint}{\epsilon_{\mathrm{int}}}   %
\newcommand{\epscov}{\epsilon_{\mathrm{cov}}}   %

\crefformat{section}{\S#2#1#3}
\crefformat{subsection}{\S#2#1#3}
\crefformat{subsubsection}{\S#2#1#3}
\crefmultiformat{section}{\S#2#1#3}{ and~\S#2#1#3}{, #2#1#3}{ and~#2#1#3}

\title{

BOReFT: Manifold Steering of\\ Language Models for Black-box Optimization

}

\author{Dhruv Agarwal$^{\ast}$, Rico Angell$^{\dagger}$, Kavitha Srinivas$^{\ddagger}$, Tahira Naseem$^{\ddagger}$, \\
\textbf{Horst Samulowitz$^{\ddagger}$, Willie Neiswanger$^{\S}$, Andrew McCallum$^{\ast}$} \\[4pt]
$^{\ast}$University of Massachusetts Amherst, $^{\dagger}$New York University, \\
$^{\ddagger}$IBM Research, $^{\S}$University of Southern California \\
\texttt{\{dagarwal,mccallum\}@cs.umass.edu}\\\\
\href{https://github.com/dhdhagar/boreft}{\raisebox{-0.1ex}{\normalfont\faGithub}\,\texttt{github.com/dhdhagar/boreft}}
}

\iclrfinalcopy[preprint]

\begin{document}

\maketitle

\begin{abstract}
Language models are increasingly used as proposal models for black-box search, from program optimization to molecular design.
Existing approaches typically improve proposals through iterative prompting or parameter updates, offering limited control over how completely and efficiently the model's search space is explored.
Continuous optimization methods, such as Bayesian optimization, provide a principled way to search but require a suitable domain to operate over.
To address this, we introduce BOReFT, which learns a compact, low-dimensional 
space of hidden-state interventions in a frozen language model, and uses this space as the search domain for Bayesian optimization with an external scoring function.
Empirically, we find that the learned domain 
spans semantic regions and exhibits smoothness properties that support search.
Theoretically, we show that semantic coverage and interpolation control the best score available in the learned space, and that 
decoding from this space yields a standard stochastic-bandit observation model for adaptive search.
We evaluate BOReFT on the interpretable word search task ``Semantle'' and on 
three more real-world discovery tasks in \emph{de novo} molecule property optimization.
Compared to strong LLM baselines, BOReFT finds in Semantle a higher 
number of hidden targets
and, on two out of three molecular objectives, achieves higher property scores. 
Consequently, our method provides 
a principled new 
bridge between discrete proposal spaces of LLM-based search and continuous black-box optimization.
\end{abstract}

\section{Introduction}
\label{sec:introduction}
\begin{wrapfigure}{r}{0.53\textwidth}
    \vspace{-2.1em}
    \centering
    \includegraphics[width=\linewidth]{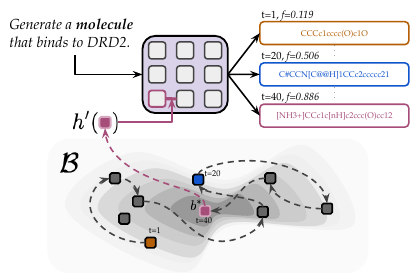}
    \vspace{-5mm}
    \caption{\textbf{BOReFT overview.}
    Bayesian optimization searches a learned intervention manifold $\Bdom$, where each code $b$ steers an LLM activation $h'$ to generate a discrete candidate scored by objective $f$.
    }
    \label{fig:overview}
    \vspace{-1em}
\end{wrapfigure}
Generative search and discovery using language models proposes new discrete solutions, such as molecules or scientific hypotheses, under a limited budget of external evaluations.
In several settings, the evaluation yields only scalar feedback; thus, efficiently finding a high-scoring proposal becomes a black-box optimization problem.
Bayesian optimization provides a principled way to perform such a search, but operates over a continuous decision domain, whereas a language model generates discrete, stochastic sequences.

Existing approaches bridge this mismatch indirectly, either through prompting or by searching representations of candidate solutions \citep{yang2024opro,agarwal2025bopro,chen2024instructzero,kristiadi2024sober,rankovic2025gollum}.
They do not search a continuous space that can be decoded directly into solutions.

BOReFT learns such a search domain inside the frozen language model.
From target sequences paired with descriptions, it learns a compact, low-dimensional manifold of hidden-state interventions.
Each code in the learned manifold specifies an intervention that the frozen decoder maps to a distribution over discrete sequences, and the learned codes define a bounded, continuous domain that Bayesian optimization searches over using only external scores (\cref{fig:overview}).
This separates learning a useful generative search domain from the downstream procedure used to explore it.

\paragraph{Contributions.} We make three contributions: \textbf{(i)} we introduce BOReFT, which learns a compact intervention manifold in a frozen language model and searches that fixed space with Bayesian optimization (\cref{sec:method}); \textbf{(ii)} we decompose performance into a representational gap determined by the learned space and a search gap determined at inference, and characterize both through semantic coverage, interpolation, and standard stochastic-bandit observations (\cref{sec:theory-prob}); \textbf{(iii)} we evaluate BOReFT on Semantle and molecular property optimization, and analyze how representation-training data, loss components, coverage, and interpolation affect downstream search (\cref{sec:experiments}).

\section{Related work}
\label{sec:related}
\paragraph{BO with language models.}
Classical latent-space BO searches continuous latent variables of a generative model and decodes them into structured objects \citep{gomez2018automatic,maus2022local}.
\citet{kristiadi2024sober} use frozen or parameter-efficiently finetuned LLMs as feature extractors for molecular BO, and GOLLuM learns an LLM deep kernel jointly with a Gaussian process \citep{rankovic2025gollum}.
BOPRO fits a surrogate over external embeddings and proposes by prompting \citep{agarwal2025bopro}, and GGOLLuM trains a LoRA actor from preferences supplied by a GP critic \citep{rankovic2026ggollum}.
Notably, InstructZero applies BO to a low-dimensional soft-prompt that a frozen LLM consumes \citep{chen2024instructzero}, but has subsequently been shown to be ineffective in practice \citep{agarwal2025bopro}.
BOReFT learns the continuous coordinates from target solutions and searches them inside the frozen generator.

\paragraph{Activation steering and representation interventions.}
Mechanistic-interpretability methods recover structure in activations, including sparse autoencoder features \citep{cunningham2024sparse,templeton2024scaling}, and steering methods intervene on those activations to change generation \citep{subramani2022steering,rimsky2024steering,zou2023repe}.
ReFT and LoReFT learn low-rank interventions for task adaptation \citep{wu2024reft}.
BOReFT uses that intervention form as the decision variables of black-box search.
\Cref{app:related} discusses test-time search and related interpretability methods.

\section{BOReFT: Bayesian optimization via Representation Finetuning}
\label{sec:method}
BOReFT is a two-phase method for learning and searching a continuous intervention space in a frozen language model.
During training, it learns a low-dimensional space from target sequences paired with auxiliary descriptions.
At inference, the learned domain is fixed, and Bayesian optimization searches over it to optimize an external black-box objective.
We view the learned space of hidden-state interventions as an \emph{intervention manifold}, whose coordinates provide a continuous search interface to the language model.

\subsection{Low-rank interventions as a search space}
\label{sec:method-intervention}

Rather than searching over prompts or model parameters, BOReFT finds continuous interventions on an internal representation built from LoReFT \citep{wu2024reft,geiger2024das}.
Let $M$ be a frozen transformer language model with hidden dimension $d$, and let $h\in\R^d$ be the residual stream activation at a chosen layer and prompt position.
For intervention rank $r\ll d$, LoReFT maps $h$ to
\begin{equation}
\label{eq:method-intervention}
h'
\;=\;
h + R^\top(Wh+b-Rh),
\qquad
R,W\in\R^{r\times d},
\quad
b\in\R^r.
\end{equation}
The rows of $R$ are the subspace in which the intervention acts, while the base model is kept frozen.
In BOReFT, we share $R$ and $W$ across a set of targets and treat the bias $b$ as a target-dependent code.
For fixed $h$, \cref{eq:method-intervention} is then an affine transformation in that subspace as a function of $b$,
\begin{equation}
\label{eq:method-affine}
h'(b)=h_0+R^\top b,
\qquad
h_0:=h+R^\top(Wh-Rh),
\end{equation}
The intervention is applied once, at the chosen prompt position, and the remaining layers decode from $h'(b)$ as usual.
We write $P_b^\tau$ for the output distribution over sequences induced by $h'(b)$ at temperature $\tau$ (or $P_b$, when $\tau=1$).
The layer and position for each experiment are in \cref{sec:experiments}.

\subsection{Learning the search space}
\label{sec:method-train}

We train on pairs $\{(s_i,d_i)\}_{i=1}^n$, where $s_i$ is a target sequence and $d_i$ is an auxiliary natural-language description of the same target.
For tasks in this work, on word search, $s_i$ is a word and $d_i$ is its dictionary-like natural language definition,
while for molecular search, $s_i$ is a SMILES string and $d_i$ is its natural-language description from ChEBI-20 \citep{edwards2021text2mol}.

\paragraph{Posterior over codes.}
Distribution-wise interventions cover a neighborhood in representation space and control behavior more reliably than pointwise interventions \citep{deng2025distributionwise}.
We therefore use a VAE-style parameterization \citep{kingma2014vae}, where each target--description pair is first passed through a frozen \textbf{semantic encoder}\footnote{Depending on the experiment, $\phi$ is either an external embedding model or a representation extracted from the frozen language model itself; the exact choice is given in \cref{sec:experiments}.} $\phi$, then through a shared trainable projection $g_\psi$ that maps it to the parameters of a diagonal Gaussian over intervention codes,
\begin{equation}
\label{eq:method-posterior}
q_i(b)
\;=
\mathcal{N}\!\bigl(\mu_i,\,\diag(\sigma_i^2)\bigr),
\qquad
(\mu_i,\sigma_i)=g_\psi\!\bigl(\phi(s_i,d_i)\bigr),
\qquad
\mu_i,\sigma_i\in\R^r.
\end{equation}
Notice that since $g_\psi$ predicts all posterior parameters, the target codes are not learned as unrelated free vectors but inherit structure from the semantic representations of $\phi$.
We use this as a geometric inductive bias, although we do not assume that semantic distances are preserved exactly.
During training we sample
$b_i=\mu_i+\sigma_i\odot\epsilon$ with $\epsilon\sim\mathcal{N}(0,I_r)$ using the reparameterization trick.

\paragraph{Trainable parameters.} The base model $M$ and semantic encoder $\phi$ remain frozen, while the intervention matrices $R,W$ and the projection parameters $\psi$ are set as trainable parameters.
For each target, $\mu_i$ and $\sigma_i$ are produced by the shared map $g_\psi(\phi(s_i,d_i))$, and the training code $b_i$ is sampled from the resulting posterior $q_i$.

\paragraph{Training objective.}
For effective search in the continuous intervention space, we would like the observed targets to be recoverable from their codes, the search space to remain compact, and codes to retain useful related behavior rather than act dictionary lookups.
We encourage these properties by constructing a loss objective that
combines reconstruction, variational prior regularization, and on-policy self-distillation, i.e., for target $i$,
\begin{equation}
\label{eq:method-loss}
\mathcal{L}^{(i)}
\;=\;
\mathcal{L}_{\mathrm{rec}}^{(i)}
\;+\;
\beta\,\mathcal{L}_{\mathrm{prior}}^{(i)}
\;+\;
\lambda\,\mathcal{L}_{\mathrm{distill}}^{(i)}.
\end{equation}
Here, reconstruction anchors each training target in the learned space, 
the variational prior penalizes posteriors that move far from a shared reference distribution, 
and self-distillation helps learn a richer distribution of output sequences beyond the target itself.
In \cref{sec:theory-prob-training}, we analyze how the three terms help shape the space that search later uses.

\paragraph{Reconstruction.}
Let $s_i=(s_{i,1},\ldots,s_{i,T_i})$ be the tokens of target $i$.
For a code $b$ sampled from $q_i$, we use teacher-forced next-token cross-entropy to reproduce that target,
\begin{equation}
\label{eq:method-rec}
\mathcal{L}_{\mathrm{rec}}^{(i)}
\;=\;
\E_{b\sim q_i}
\left[
-\frac{1}{T_i}
\sum_{t=1}^{T_i}
\log P_b\!\left(s_{i,t}\mid s_{i,<t}\right)
\right],
\end{equation}
where $s_{i,<t}$ is the prefix before position $t$.

\paragraph{Variational prior.}
We use an isotropic Gaussian prior $p_0=\mathcal{N}(0,\sigma_0^2 I_r)$ to keep the posteriors at a shared scale, and penalize the KL divergence between each posterior and this prior,
\begin{equation}
\label{eq:method-prior}
\mathcal{L}_{\mathrm{prior}}^{(i)}
\;=\;
\KL(q_i\,\|\,p_0).
\end{equation}
This discourages posteriors from moving arbitrarily far apart, thus limiting the spread of the learned space and keeping search efficient.

\paragraph{On-policy self-distillation.}
Sampling from proposal distributions has been shown to improve LLM search \citep{brown2024monkeys,wang2023selfconsistency}; so the learned codes should emit a distribution of outputs instead of a single sequence.
We enable this via knowledge distillation \citep{hinton2015distilling} on on-policy rollouts from the posterior\footnote{In practice, we include target sequence $s_i$ resulting in a mixed policy \citep{yan2026learning,phan2025migrate}.}, where the student is the intervened model $P_b$ and the teacher $Q_i$ is a frozen copy of the unintervened model conditioned on the description $d_i$ as privileged information \citep{zhao2026selfdistilled,hubotter2026reinforcement}.
The teacher then supplies dense token-level supervision via
\begin{equation}
\label{eq:method-distill}
\mathcal{L}_{\mathrm{distill}}^{(i)}
\;=\;
\E_{\mathcal{T}_i}
\E_{b\sim q_i}
\left[
\frac{1}{\sum_{S\in\mathcal{T}_i}|S|}
\sum_{S\in\mathcal{T}_i}
\sum_{t=1}^{|S|}
\KL\!\left(
Q_i(\cdot\mid S_{<t})
\,\middle\|\,
P_b(\cdot\mid S_{<t})
\right)
\right].
\end{equation}
Unlike in previous work, we use forward KL here to cover all behaviors exhibited by the teacher.

\paragraph{Search domain.}
After training, we no longer sample from the target posteriors to propose search points.
Instead, we construct a fixed search domain from the learned posterior means using their axis-aligned bounding box \citep{frazier2018tutorial,siivola2021good}, which is the Cartesian product of $r$ component-wise intervals,
\begin{equation}
\label{eq:method-box}
\Bdom
\;=\;
\prod_{j=1}^{r}
\left[
\min_i \mu_{i,j},\;
\max_i \mu_{i,j}
\right] \subset \mathbb{R}^r.
\end{equation}
Training therefore produces a generative map $b\mapsto P_b^\tau$ together with a bounded continuous domain $\Bdom$ over which that map can be queried, rather than just the $n$ training means.

\begin{figure}[t]
\centering
\begin{minipage}[c]{0.46\textwidth}
\begin{algorithm}[H]
\footnotesize
\caption{BOReFT: learning and searching a low-rank generative space}
\label{alg:boreft}
\begin{algorithmic}[1]
\Require Frozen LM $M$; pairs $\{(s_i,d_i)\}_{i=1}^n$
\Statex \hspace{\algorithmicindent}encoder $\phi$; objective $f$; budget $Q$
\Statex \textbf{Phase I: learn the search space}
\For{each training update}
    \State $(\mu_i,\sigma_i)\gets g_\psi(\phi(s_i,d_i))$
    \State $b_i=\mu_i+\sigma_i\odot\epsilon$,\; $\epsilon\sim\mathcal{N}(0,I_r)$
    \State Sample rollouts from $P_{b_i}$
    \Statex \hspace{\algorithmicindent}\hspace{\algorithmicindent}(optionally include $s_i$) to form $\mathcal{T}_i$
    \State Update $R,W,\psi$ on \cref{eq:method-rec,eq:method-prior,eq:method-distill}
\EndFor
\State $\Bdom\gets\prod_{j=1}^r[\min_i\mu_{ij},\max_i\mu_{ij}]$
\Statex \hspace{\algorithmicindent}from the trained posterior means
\Statex \textbf{Phase II: search the learned space}
\For{$t=1,\ldots,Q$}
    \State Fit surrogate on observed code--score pairs
    \State Optimize acquisition function
    \Statex \hspace{\algorithmicindent}\hspace{\algorithmicindent}$b_t\in\arg\max_{b\in\Bdom} a_t(b)$
    \State Decode $S_t\sim P_{b_t}^{\tau_{\mathrm{search}}}$
    \State Evaluate $y_t=f(S_t)$
\EndFor
\State \Return the highest-scoring sequence
\end{algorithmic}
\end{algorithm}
\end{minipage}\hfill
\begin{minipage}[c]{0.52\textwidth}
\centering
\includegraphics[width=\linewidth]{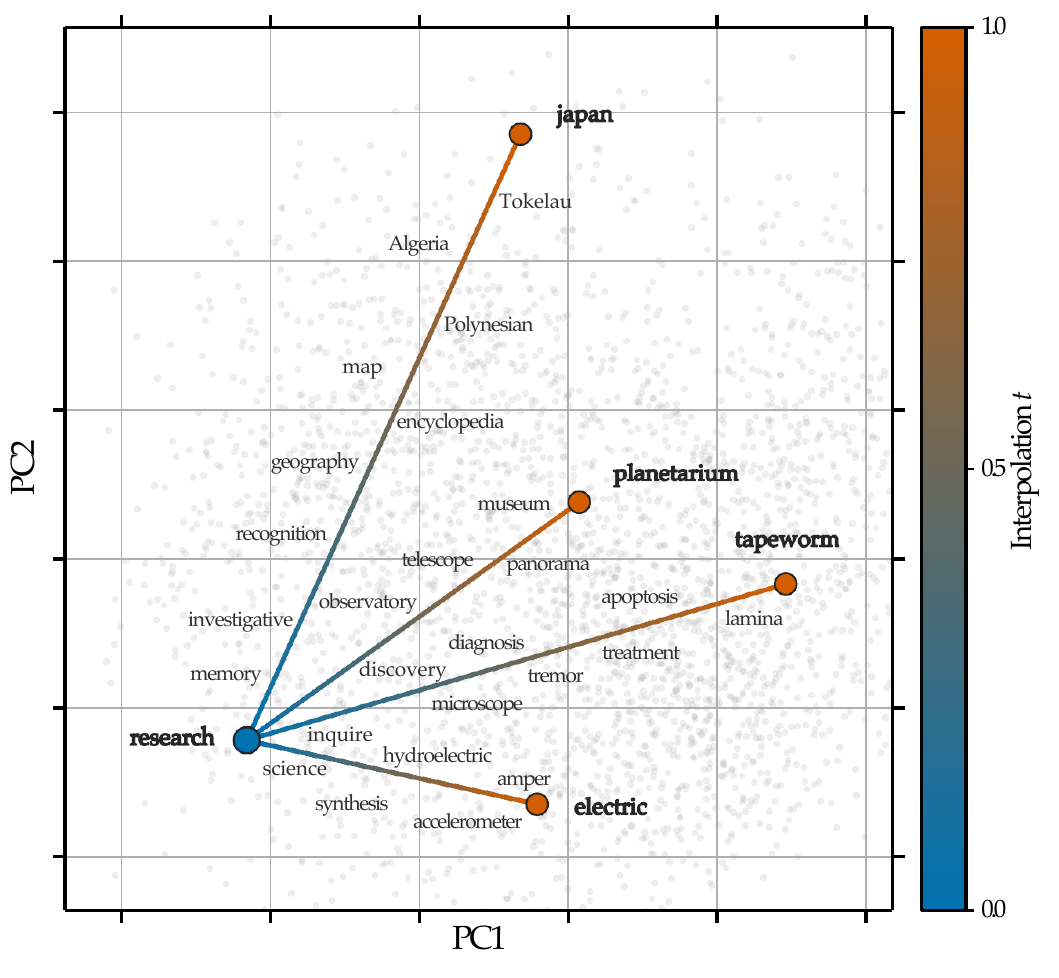}
\caption{\textbf{Interpolation in the learned code space.}
PCA of the $3072$ Semantle training posterior means.
Color denotes interpolation weight $t$.}
\label{fig:interp-code-pca}
\end{minipage}
\end{figure}

\subsection{Searching the learned space with Bayesian optimization}
\label{sec:method-search}

At inference, the domain $\Bdom$ is fixed and search varies only the code $b\in\Bdom$.
A query chooses a code, decodes at temperature $\tau_{\mathrm{search}}$, and returns a score using an external black-box objective $f$,
\begin{equation}
\label{eq:method-query}
b_t\in\Bdom
\;\longrightarrow\;
S_t\sim P_{b_t}^{\tau_{\mathrm{search}}}
\;\longrightarrow\;
y_t=f(S_t).
\end{equation}
To choose the next code, we use Bayesian optimization over $\Bdom$ \citep{shahriari2016taking,frazier2018tutorial} as a principled framework for budgeted black-box search over continuous domains.
After $t$ queries, a Gaussian process surrogate \citep{rasmussen2003gaussian} is fit to the observed pairs $\{(b_j,y_j)\}_{j=1}^t$, and the next code is selected by maximizing an acquisition function $a_t$ built from the surrogate,
\begin{equation}
\label{eq:method-acq}
b_{t+1}
\in
\argmax_{b\in\Bdom} \, a_t(b),
\end{equation}
which defines the search strategy.
In our main experiments, $a_t$ is log expected improvement \citep{ament2023unexpected}.
After $Q$ evaluations, we return the highest-scoring sequence observed during the run.
\Cref{alg:boreft} summarizes the full training and search procedure.

\section{What can search over the learned code space return?}
\label{sec:theory-prob}

At inference, BOReFT is the search procedure of \cref{sec:method} in which each query proposes a code $b \in \Bdom$, the frozen decoder generates a sequence, and the task objective scores it.
We ask how training shapes that space (\cref{sec:theory-prob-training}) and when adaptive search over it is sample-efficient (\cref{sec:theory-prob-bo}).
The usual i.i.d.\ generalization framework does not apply, because held-out discovery candidates are not drawn from the training distribution, and success is measured by finding \emph{some} code in $\Bdom$ that makes the frozen decoder emit a high-scoring sequence. 

\subsection{The search problem and a decomposition of its regret}
\label{sec:theory-prob-setup}
\label{sec:theory-prob-decomp}

Each query draws $b$ from the box $\Bdom$ of \cref{eq:method-box}, then samples $S \sim P^\tau_b$ to return $f(S)$.
For $\tau>0$, this is a noisy evaluation of the expected sampled score,
\begin{equation}
\label{eq:theory-prob-fexp}
F_\tau(b) \;:=\; \E_{S \sim P^\tau_b}\big[f(S)\big] \;=\; \sum_{s} P^\tau_b(s)\, f(s),
\end{equation}
where the sum runs over the finitely many sequences the decoder can emit under the length cap.\footnote{Under greedy decoding, the observation is the score of a deterministic sequence, which is piecewise constant in $\Bdom$. Sampling at $\tau > 0$ replaces it with a noisy observation whose expectation is continuous in $b$.} We write $F^\star_\tau := \max_{b \in \Bdom} F_\tau(b)$ for the best expected score available in the learned space, and $f^\star := \max_s f(s)$ for the best score of any sequence
attained at a sequence $s^\star$.

\paragraph{Regret decomposition.} The regret of a query at a code $b \in \Bdom$, measured against the optimum, then splits into two nonnegative parts,
\begin{equation}
\label{eq:theory-prob-regret}
f^\star - F_\tau(b)
\;=\;
\underbrace{f^\star - F^\star_\tau}_{\text{representational gap}}
\;+\;
\underbrace{F^\star_\tau - F_\tau(b)}_{\text{search gap}} .
\end{equation}
The model, rank, and training fix the representational gap, and further search cannot reduce it.
The search gap, on the other hand, is controlled by the inference procedure.
\Cref{sec:theory-prob-training} asks when $F^\star_\tau$ is large, and \cref{sec:theory-prob-bo} asks how efficiently can adaptive search over $\Bdom$ find it.

\subsection{How training shapes the space that is searched}
\label{sec:theory-prob-training}

\paragraph{Effect of the loss components.} We first describe the effect of each loss component on the learned space. While reconstruction makes each training target searchable from its code, it does not by itself guarantee that a held-out sequence is reachable in $\Bdom$ (\cref{prop:theory-prob-nfl})\footnote{In the spirit of the No Free Lunch theorems \citep{wolpert1996lack,wolpert1997no}. 
}.
Next, as discussed in \cref{sec:method-train}, the VAE prior limits how far the means spread; \cref{lem:theory-prob-train}(i) shows that it bounds the resultant search domain $\Bdom$ by the rank and the scale of the prior.
Finally, when the expected forward KL to the teacher is small, self-distillation makes the student track the teacher on every set of sequences, after averaging over the posterior (\cref{lem:theory-prob-train}(ii)).

\paragraph{Semantic interpolation and coverage.} A property that makes the learned space useful for search is that moving between learned anchors changes the decoded semantics gradually, so a code between anchors can express meaning that no single training target provides.
\Cref{fig:interp-code-pca} shows this on a 2D PCA projection of the learned Semantle space: temperature-sampled decodes along linear interpolations of the training posterior means follow different semantic directions for different anchors.

Whether this property helps a specific search task depends on that task respecting the same semantic geometry. 
We make this link explicit below by requiring the task score to vary smoothly with distance in the semantic representation. 
Without such a condition, semantic interpolation alone need not imply similar task value.
Let $\phi(s)$ be the semantic embedding from \cref{sec:method-train}, and let $\mu_i$ be the learned mean of training target $s_i$.
For convex weights $\alpha$ in the simplex $\mathcal{S}_n$, we compare the semantic interpolant of the targets with the interpolant of their codes,
\begin{equation}
\label{eq:theory-prob-interp}
x_\alpha \;:=\; \sum_i \alpha_i\, \phi(s_i),
\qquad
b_\alpha \;:=\; \sum_i \alpha_i\, \mu_i.
\end{equation}
The intervention is affine in the code (\cref{eq:method-affine}), so $h'(b_\alpha)=\sum_i \alpha_i h'(\mu_i)$, an instantiation that some prior work has also used \citep{park2024linear,subramani2022steering}.
However, the nonlinear decoder may still result in sequences that deviate from the interpolant, which must be measured empirically. %
Over a nonempty set of interpolation weights $\mathcal{A} \subseteq \mathcal{S}_n$, and for any sequence $s$, we therefore define
\begin{equation}
\label{eq:theory-prob-epsint}
\epsint \;:=\; \sup_{\alpha \in \mathcal{A}} \; \E_{S \sim P^\tau_{b_\alpha}} \norm{\phi(S) - x_\alpha}_2,
\qquad
\epscov(s) \;:=\; \inf_{\alpha \in \mathcal{A}} \; \norm{\phi(s) - x_\alpha}_2,
\end{equation}
as the \textbf{interpolation error} of the learned space and the \textbf{semantic coverage error} of $s$, respectively. The first measures how far decoded sequences lie from the semantic interpolant of the anchors in the embedding space, and the second measures how far a given sequence lies from the semantic interpolants made available by the training set.

\begin{proposition}[Semantic coverage and interpolation bound the representational gap]
\label{prop:theory-prob-interp}
Suppose the task score $f$ is Lipschitz with respect to the semantic embedding, in that $|f(s) - f(s')| \le L_f \norm{\phi(s) - \phi(s')}_2$ for all sequences $s, s'$ and some constant $L_f \ge 0$ set by the task, and let the interpolation weights $\mathcal{A} \subseteq \mathcal{S}_n$ be nonempty. Then, for any sequence $s$,
\begin{equation}
\label{eq:theory-prob-interpbound}
F^\star_\tau \;\ge\; f(s) \;-\; L_f \big( \epscov(s) + \epsint \big) .
\end{equation}
(Proof in \cref{app:prob-interp}.)
\end{proposition}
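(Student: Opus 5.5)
The plan is to exhibit, for each admissible weight vector $\alpha\in\mathcal{A}$, a specific code in $\Bdom$ whose expected score we can lower-bound. The natural candidate is the code interpolant $b_\alpha=\sum_i\alpha_i\mu_i$ of \cref{eq:theory-prob-interp}. We then pass from $F_\tau(b_\alpha)$ to $F^\star_\tau$ by maximality, and optimize over $\alpha$ at the end.

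\textbf{Step 1: $b_\alpha\in\Bdom$.} Since $\alpha$ lies in the simplex, each coordinate satisfies $b_{\alpha,j}=\sum_i\alpha_i\mu_{i,j}\in[\min_i\mu_{i,j},\max_i\mu_{i,j}]$. Hence $b_\alpha$ lies in the box of \cref{eq:method-box}, and therefore $F^\star_\tau\ge F_\tau(b_\alpha)$. Here $F^\star_\tau$ is a maximum over the compact box $\Bdom$. If one does not want to argue continuity of $b\mapsto F_\tau(b)$, which holds for $\tau>0$ as a finite sum of softmax-derived probabilities, it suffices to read $F^\star_\tau$ as a supremum; the inequality is unaffected.

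\textbf{Step 2: lower bound $F_\tau(b_\alpha)$.} Fix $\alpha\in\mathcal{A}$ and the target sequence $s$. For every sequence $S$, the Lipschitz assumption gives $f(S)\ge f(s)-L_f\norm{\phi(S)-\phi(s)}_2$. The triangle inequality then gives
\begin{equation*}
\norm{\phi(S)-\phi(s)}_2\le\norm{\phi(S)-x_\alpha}_2+\norm{x_\alpha-\phi(s)}_2 .
\end{equation*}
Taking expectation over $S\sim P^\tau_{b_\alpha}$ (a finite sum, so no integrability issues) yields
\begin{equation*}
F_\tau(b_\alpha)\ge f(s)-L_f\Big(\E_{S\sim P^\tau_{b_\alpha}}\norm{\phi(S)-x_\alpha}_2+\norm{\phi(s)-x_\alpha}_2\Big).
\end{equation*}
By definition of $\epsint$ as a supremum over $\mathcal{A}$, the first term is at most $\epsint$. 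Using $L_f\ge0$, we obtain $F^\star_\tau\ge f(s)-L_f(\epsint+\norm{\phi(s)-x_\alpha}_2)$ for every $\alpha\in\mathcal{A}$.

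\textbf{Step 3: optimize over $\alpha$.} The coverage error is an infimum that need not be attained, so we argue with slack. For any $\delta>0$, pick $\alpha\in\mathcal{A}$ (nonempty by assumption) with $\norm{\phi(s)-x_\alpha}_2\le\epscov(s)+\delta$. This gives $F^\star_\tau\ge f(s)-L_f(\epscov(s)+\epsint)-L_f\delta$, and letting $\delta\to0$ gives \cref{eq:theory-prob-interpbound}.

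There is no serious obstacle. The only points needing care are that the box is exactly the set containing all convex combinations of the means, and the attainment issues for the infimum and maximum handled above. One should also note that $\phi$ is applied to sequences alone here, consistent with its use in the statement. If $\epsint=\infty$, the bound is trivially true.
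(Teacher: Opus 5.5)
Your proposal is correct and follows the paper's proof: you use the code interpolant $b_\alpha \in \Bdom$, apply the Lipschitz bound inside the expectation, apply the triangle inequality through $x_\alpha$, bound the decoder term by $\epsint$, and then optimize over $\alpha \in \mathcal{A}$. Your $\delta$-slack handling of the infimum, and your remark on reading $F^\star_\tau$ as a supremum, are only minor added care beyond the paper's argument.
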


The Lipschitz condition, a property of the task relative to $\phi$, links interpolation to search, by which sequences that are close under $\phi$ have similar task scores.
Under this, taking $s = s^\star$ bounds the representational gap of \cref{eq:theory-prob-regret} by $L_f(\epscov(s^\star) + \epsint)$.
The Lipschitz condition holds for Semantle, whose score is cosine similarity between normalized embeddings, so $L_f = 1$.
For the molecular predictors it remains an assumption.
In \cref{sec:experiments} we measure coverage of held-out targets and the displacement of the mean decoded semantics along interpolated codes,
which is at most $\epsint$ (\cref{app:prob-interp}).

\begin{takeaway}{Takeaway}
Reconstruction anchors known behaviors at codes, the prior keeps them inside a compact search region,
and self-distillation distributes the teacher outputs on codes around the anchors.
If nearby embeddings have similar task scores, the representational gap is bounded by how well the optimal sequence is covered by the train set and how close the decoder stays to semantic interpolants (\cref{prop:theory-prob-interp}).
\end{takeaway}

\subsection{Guarantees for adaptive search}
\label{sec:theory-prob-bo}

After training, the remaining question is how efficiently an adaptive procedure closes the \emph{search gap} of \cref{eq:theory-prob-regret}. In each round, a new query is selected based on the task scores observed so far, returning a noisy evaluation of the expected sampled score $F_\tau$ of \cref{eq:theory-prob-fexp}.
Here we assume the task score is bounded in an interval $[f_{\min}, f_{\max}]$ and write $\Delta_f := f_{\max} - f_{\min}$ for its range.

\begin{proposition}[Queries are standard stochastic bandit observations]
\label{prop:theory-prob-bo}
Let each query $b_t \in \Bdom$ be proposed by any rule that depends on the past queries and observations $\mathcal{H}_{t-1}$, for example by maximizing an acquisition function fit to them, and let $Y_t := f(S_t)$ with $S_t \sim P^\tau_{b_t}$. Then $\E[Y_t \mid \mathcal{H}_{t-1}, b_t] = F_\tau(b_t)$, and the observation error $\xi_t := Y_t - F_\tau(b_t)$, the deviation of the sampled score from its conditional expectation, is conditionally $(\Delta_f/2)$-sub-Gaussian; i.e., its tails decay at least as fast as those of a Gaussian with standard deviation $\Delta_f/2$. (Proof in \cref{app:prob-bo}.)
\end{proposition}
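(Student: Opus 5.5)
The plan is to condition on the history and the current query, and to use the fact that the decoder's fresh randomness is independent of both. The query rule makes $b_t$ a measurable function of $\mathcal{H}_{t-1}$, possibly with extra independent randomization, for instance from acquisition optimization. The sample $S_t$ is drawn from $P^\tau_{b_t}$ using sampling noise that is independent of everything that came before. So the conditional law of $S_t$ given $(\mathcal{H}_{t-1}, b_t)$ is exactly $P^\tau_{b_t}$. This is the only modelling point that needs to be stated carefully; I would formalize it by writing $S_t = G(b_t, U_t)$ with $U_t$ independent of $(\mathcal{H}_{t-1}, b_t)$.

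\textbf{Step 1 (conditional mean).} Given the conditional law, the freezing lemma, or equivalently direct computation on the finite sequence space, gives
\[
\E[Y_t \mid \mathcal{H}_{t-1}, b_t] = \sum_s P^\tau_{b_t}(s) f(s) = F_\tau(b_t),
\]
which is \cref{eq:theory-prob-fexp} evaluated at the random point $b_t$. The sum is finite, so there are no integrability issues. It follows that $\E[\xi_t \mid \mathcal{H}_{t-1}, b_t] = 0$.

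\textbf{Step 2 (sub-Gaussianity).} Conditionally on $(\mathcal{H}_{t-1}, b_t)$, $Y_t$ takes values in $[f_{\min}, f_{\max}]$, an interval of length $\Delta_f$. Hoeffding's lemma applied to the conditional distribution then gives, for all $\lambda \in \R$,
\[
\E\bigl[e^{\lambda \xi_t} \mid \mathcal{H}_{t-1}, b_t\bigr] \le \exp\!\bigl(\lambda^2 \Delta_f^2 / 8\bigr),
\]
which is the definition of conditional $(\Delta_f/2)$-sub-Gaussianity. Because the bound holds conditionally on $b_t$, it also holds conditionally on $\mathcal{H}_{t-1}$ alone, by the tower property, which is the form used in GP-bandit analyses. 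Finally, I would note that $\xi_t$ is adapted to $\mathcal{H}_t$, so the errors form a conditionally sub-Gaussian martingale difference sequence. This is exactly the observation model assumed by standard GP-UCB and related regret bounds.

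The main obstacle is the measure-theoretic care in Step 1. The query $b_t$ is adaptive, so one must make sure that conditioning on it does not bias $S_t$. The independence of the fresh decoding randomness handles this, and once it holds, the rest is a direct application of Hoeffding's lemma.
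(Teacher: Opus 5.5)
Your proof is correct and follows the paper's argument. You read the conditional mean directly off the conditional law $P^\tau_{b_t}$ of $S_t$ given $(\mathcal{H}_{t-1}, b_t)$, then apply Hoeffding's lemma on the width-$\Delta_f$ interval to get the $(\Delta_f/2)$-sub-Gaussian bound; your fresh-randomness representation, tower-property extension to $\mathcal{H}_{t-1}$ alone, and martingale-difference remark simply make explicit what the paper leaves implicit and later uses in \cref{cor:theory-prob-bo-best}.
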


In words, each query is an unbiased, conditionally sub-Gaussian observation of $F_\tau$, for any policy that proposes the code. Standard bandit and Bayesian optimization guarantees apply to this observation model only under their own further assumptions.
Discovery depends on the best sequence a run emits, which the next corollary controls for any policy.
\begin{corollary}[Cumulative regret to discovery guarantee]
\label{cor:theory-prob-bo-best}
Let $R_Q := \sum_{t \le Q} \big( F^\star_\tau - F_\tau(b_t) \big)$ be the cumulative regret of any policy against the best expected score $F^\star_\tau$ over $\Bdom$, and let $S_1, \dots, S_Q$ be the sequences its queries return. Then, with probability at least $1 - \delta$,
\begin{equation}
\label{eq:theory-prob-bobest}
\max_{t \le Q} f(S_t) \;\ge\; F^\star_\tau \;-\; \frac{R_Q}{Q} \;-\; \Delta_f \sqrt{\frac{\log(1/\delta)}{2Q}} .
\end{equation}
(Proof in \cref{app:prob-bo}.)
\end{corollary}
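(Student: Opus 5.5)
The plan is to bound the maximum by the average and then control the average with a martingale concentration bound built from \cref{prop:theory-prob-bo}. Since a maximum dominates an average, $\max_{t\le Q} f(S_t) \ge \frac{1}{Q}\sum_{t\le Q} Y_t$ with $Y_t = f(S_t)$. We then write each observation as
\begin{equation*}
Y_t = F_\tau(b_t) + \xi_t = F^\star_\tau - \bigl(F^\star_\tau - F_\tau(b_t)\bigr) + \xi_t .
\end{equation*}
Summing over $t$ and dividing by $Q$ gives the exact identity
\begin{equation*}
\frac{1}{Q}\sum_{t\le Q} Y_t = F^\star_\tau - \frac{R_Q}{Q} + \frac{1}{Q}\sum_{t\le Q}\xi_t .
\end{equation*}
Hence it suffices to show that $\sum_{t\le Q}\xi_t \ge -\Delta_f\sqrt{Q\log(1/\delta)/2}$ with probability at least $1-\delta$.

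For this lower tail we use \cref{prop:theory-prob-bo}. It gives $\E[\xi_t\mid\mathcal{H}_{t-1},b_t]=0$, so $(\xi_t)$ is a martingale difference sequence with respect to the filtration generated by $\mathcal{H}_{t-1}$ together with $b_t$. This holds for any adaptive policy, because $b_t$ is measurable with respect to the past.

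Each $\xi_t$ also lies in an interval of length $\Delta_f$ that is predictable given the past, namely $[f_{\min}-F_\tau(b_t),\, f_{\max}-F_\tau(b_t)]$. Hoeffding's lemma then gives the conditional bound $\E[e^{-\lambda\xi_t}\mid\cdot]\le e^{\lambda^2\Delta_f^2/8}$; this is the $(\Delta_f/2)$-sub-Gaussian statement of \cref{prop:theory-prob-bo}. Iterating this conditional bound by the tower property gives
\begin{equation*}
\E\bigl[e^{-\lambda\sum_t\xi_t}\bigr]\le e^{Q\lambda^2\Delta_f^2/8}.
\end{equation*}
Applying Markov's inequality and optimizing over $\lambda$ (the Azuma--Hoeffding argument) yields
\begin{equation*}
\Pr\Bigl(\sum_t \xi_t \le -\epsilon\Bigr)\le \exp\bigl(-2\epsilon^2/(Q\Delta_f^2)\bigr).
\end{equation*}
Setting the right-hand side equal to $\delta$ gives $\epsilon=\Delta_f\sqrt{Q\log(1/\delta)/2}$. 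Dividing by $Q$ produces exactly the term $\Delta_f\sqrt{\log(1/\delta)/(2Q)}$ in \cref{eq:theory-prob-bobest}.

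The main point needing care is the adaptivity of $b_t$: the $\xi_t$ are not independent, so the argument must condition step by step rather than apply plain Hoeffding. Azuma--Hoeffding handles this because \cref{prop:theory-prob-bo} gives the conditional sub-Gaussian bound for every history. The regret term $R_Q$ needs no probabilistic control, since the decomposition above is an identity. One subtlety is that $R_Q$ is itself random, but the bound holds pathwise given the high-probability event on $\sum_t\xi_t$, so no issue arises.
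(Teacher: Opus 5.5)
Your proposal is correct and follows the paper's proof essentially step for step. Both arguments use the averaged identity $\frac{1}{Q}\sum_{t \le Q} Y_t = F^\star_\tau - R_Q/Q + \frac{1}{Q}\sum_{t \le Q}\xi_t$, then Azuma--Hoeffding on the conditionally $(\Delta_f/2)$-sub-Gaussian martingale differences from \cref{prop:theory-prob-bo}, and then $\max_{t \le Q} Y_t \ge \frac{1}{Q}\sum_{t \le Q} Y_t$. Your explicit tower-property derivation of the tail bound and your remark that $R_Q$ enters only through a pathwise identity are details the paper leaves implicit, but they do not change the argument.
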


In words, if a cumulative-regret bound holds, it translates into a guarantee on the best sequence the run emits (\cref{app:prob-bo}), and
subtracting the coverage and interpolation errors additionally requires the Lipschitz condition of \cref{prop:theory-prob-interp} to hold.

\begin{takeaway}{Takeaway}
Each query is an unbiased, conditionally sub-Gaussian observation of $F_\tau$ (\cref{prop:theory-prob-bo}), so standard bandit and Bayesian optimization guarantees apply when their remaining regularity assumptions hold, and those guarantees control the best sequence the run emits (\cref{cor:theory-prob-bo-best}).
\end{takeaway}

\section{Experiments}
\label{sec:experiments}
We evaluate BOReFT on two tasks, Semantle word search and molecular property optimization, and organize the experiments around the following questions. First, whether search over the learned code space finds high-scoring sequences unseen during representation training. Second, how that comparison holds when other methods share the same representation-training data. Third, what components of BOReFT's training contribute to its performance, and last, whether the coverage and interpolation errors are consistent with the theory.
We also provide additional analyses and details in \Cref{app:experiments}, including the effect on search quality of alternative search domain construction, decoding temperature, kernels, acquisition functions, and intervention rank.

\subsection{Tasks and evaluation protocol}
\label{sec:experiments-setup}

\paragraph{Semantle word search.}
In Semantle \citep{agarwal2025bopro}, the generator proposes a single word and receives the cosine similarity of that word to a hidden target under a frozen embedding model, Qwen3-Embedding-0.6B \citep{zhang2025qwen3embedding}, as the black-box function $f$.
BOReFT uses Llama-3.2-1B-Instruct \citep{meta2024llama32} both as the generator and as the source of $\phi$, sets rank $r=64$, and uses for representation training $3072$ words ($\sim$2 tokens/word) paired with synthetically-generated definitions from Claude Opus 5 \citep{anthropic2026claudeopus}.
We score five hidden targets each from the train and held-out sets, each with three different warmstart sets of ten words each, and a budget of $500$ evaluations. 
The complete setup and protocol are in \cref{app:experiments-models,app:experiments-protocols}.

\paragraph{Molecular property optimization.}
The second task asks for a new molecule as a SMILES string \citep{weininger1988smiles} with high predicted activity on a protein target, a standard goal in drug discovery.
We use DRD2, GSK3$\beta$, and JNK3 predictors \citep{huang2021tdc,gao2022sample} as black-box objective functions, where each score is a class-1 probability in $[0,1]$, and an invalid SMILES string receives $0$.
BOReFT uses MiST \citep{bran2025mist}, a chemistry-pretrained Qwen2.5-3B model, as the generator and Qwen3-Embedding-0.6B as $\phi$, sets rank $r=64$, and trains \underline{once} on $1024$ description--molecule pairs from ChEBI-20 \citep{edwards2021text2mol} from molecules within the 90th percentile of activity for any target (i.e., DRD2 $\le 0.029$, GSK3$\beta$ $\le 0.10$, JNK3 $\le 0.04$; $\sim$31 tokens/molecule).
The budget is $500$ evaluations, with five warmstart sets of ten molecules.

\paragraph{Baselines and SFT evaluation.}
We compare with LLM-search baselines OPRO \citep{yang2024opro}, BOPRO \citep{agarwal2025bopro}, MiGrATe \citep{phan2025migrate}, AutoDiscovery \citep{agarwal2025autodiscovery}, SDPO-TTT \citep{hubotter2026reinforcement}, and random repeated sampling. These cover prevalent search strategies, including in-context learning, Bayesian optimization, tree-search, and test-time training.
We also include as a reference discrete BO \citep{rankovic2025gollum} over frozen embeddings, which can only propose sequences from the training set.
Lastly, to strictly compare the quality of search, we also evaluate the baselines after supervised fine-tuning (SFT) on the same train set that BOReFT uses, which puts the methods on a shared data prior.
We do this by training a LoRA adapter \citep{hu2022lora} for each base model for ten epochs.
The search mechanism for each baseline is in \cref{app:experiments-baselines}.

\subsection{Results and analyses}
\label{sec:experiments-main}

\paragraph{Semantle.}
\Cref{fig:search-baselines} follows the best similarity found at each verification step on the held-out words, and \cref{tab:search-baselines} gives the final exact-match (EM) count, mean of the best-found similarity, and novelty rate (w.r.t. training) across 5 seeds.
BOReFT recovers $8/15$ held-out targets, at mean similarity $0.892$, with novelty rate $0.691$, while the next-best EM count is only $3/15$ before SFT alignment and $2/15$ after it.
Further, discrete BO finds 0 held-out words by construction, but it finds every training word in the fewest number of evaluations (\cref{tab:search-train}).
Fine-tuning the baselines on the training words raises the held-out EM and similarity scores of AutoDiscovery and random sampling, while still trailing BOReFT, but also lowers scores on both metrics for BOPRO and OPRO. We discuss reasons for baseline performance further in \cref{app:experiments-baseline-discussion}.

\begin{figure}[t]
\centering
\includegraphics[width=0.9\linewidth]{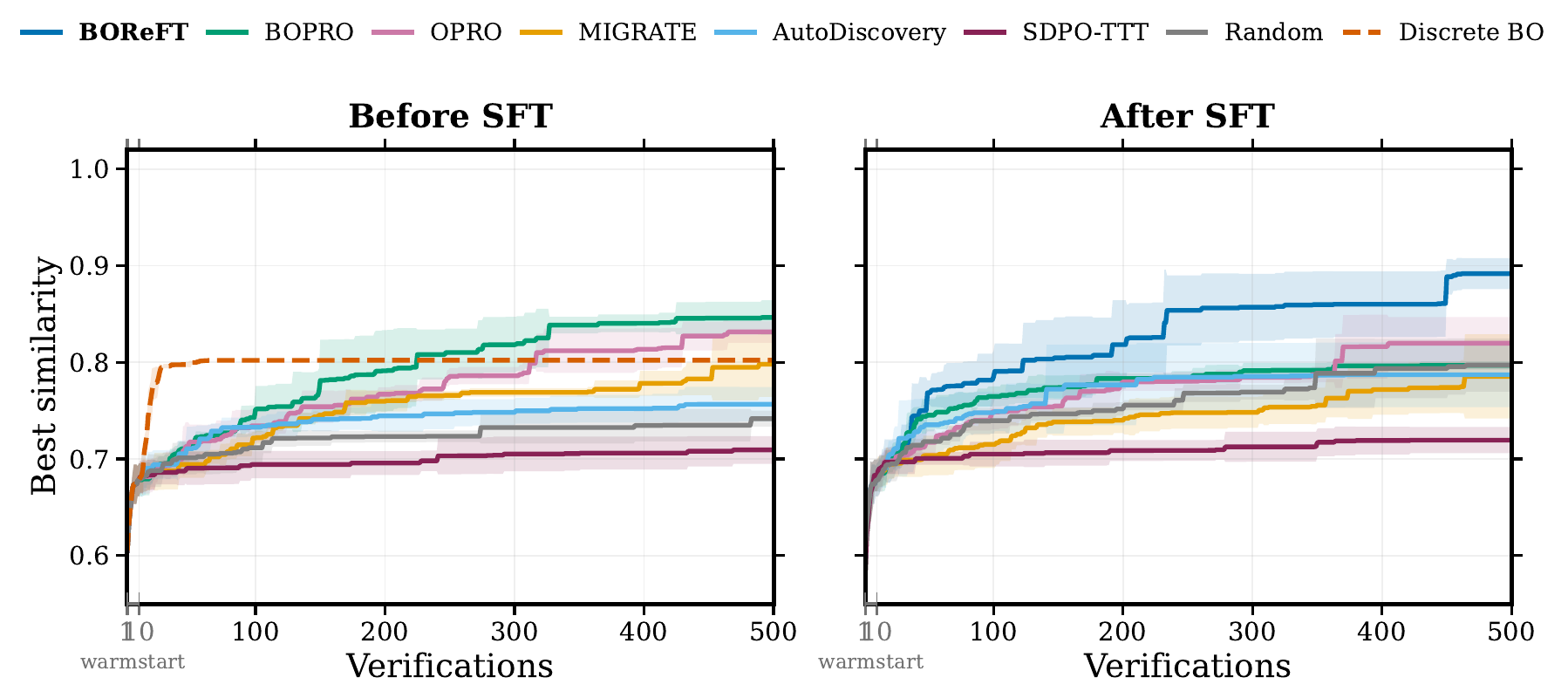}
\vspace{-0.5em}
\caption{\textbf{Best-so-far performance on held-out Semantle search.}
Best semantic similarity found at each verification step, averaged over 5 targets and 3 warmstart runs, with a band of ±1 SD.
We plot BOReFT only in the after-SFT block to group together methods that all use the training data.
}
\label{fig:search-baselines}
\end{figure}

\begin{table}[t]
\centering
\small
\setlength{\tabcolsep}{4pt}
\begin{tabular}{@{}lcccccc@{}}
\toprule
& \multicolumn{3}{c}{\textbf{Before SFT}} & \multicolumn{3}{c}{\textbf{After SFT}} \\
\cmidrule(lr){2-4} \cmidrule(lr){5-7}
\textbf{Method} & EM\,($\uparrow$) & Sim.\,($\uparrow$) & Novelty\,($\uparrow$) & EM\,($\uparrow$) & Sim.\,($\uparrow$) & Novelty\,($\uparrow$) \\
\midrule
\textit{Discrete BO} & \textit{0/15} & \textit{0.802} & \textit{0.000} & \textit{0/15} & \textit{0.802} & \textit{0.000} \\
\noalign{\vskip\aboverulesep}
\cdashline{1-7}
\noalign{\vskip\belowrulesep}
Random & $0/15$ & $0.710$ & $0.561$ & $1/15$ & $0.797$ & $0.119$ \\
SDPO-TTT & $0/15$ & $0.709$ & $0.679$ & $0/15$ & $0.719$ & $\underline{0.676}$ \\
AutoDiscovery & $0/15$ & $0.757$ & $\underline{0.736}$ & $1/15$ & $0.787$ & $0.154$ \\
MiGrATe & $1/15$ & $0.798$ & $0.630$ & $1/15$ & $0.785$ & $0.621$ \\
BOPRO & $\underline{3/15}$ & $\underline{0.846}$ & $\mathbf{0.798}$ & $0/15$ & $0.797$ & $0.203$ \\
OPRO & $\underline{3/15}$ & $0.831$ & $0.708$ & $\underline{2/15}$ & $\underline{0.820}$ & $0.212$ \\
\midrule
\textbf{BOReFT} & $\mathbf{8/15}$ & $\mathbf{0.892}$ & $0.691$ & $\mathbf{8/15}$ & $\mathbf{0.892}$ & $\mathbf{0.691}$ \\
\bottomrule
\end{tabular}
\caption{\textbf{Held-out Semantle search.}
Exact-match counts and mean best similarity over 15 runs (5 held-out targets and 3 warmstarts) with the rate of novelty w.r.t. the train set.
Bold and underline mark the best and second-best. BOReFT scores are repeated under before-SFT for clarity.}
\label{tab:search-baselines}
\end{table}

\paragraph{Molecular property optimization.}
\Cref{tab:molopt-search} reports for each property the mean and best activation probability found over 5 seeds for BOReFT and baselines, before and after SFT. Despite the percentile-90 training, BOReFT generalizes well beyond the scores it is trained on, and, notably, supports multiple task objectives simultaneously.
Before SFT, 
BOReFT leads JNK3 with mean $0.370$ and best score $0.940$, the highest by a margin $\ge 0.5$, and also finds the highest GSK3$\beta$ score of $0.670$. On DRD2, however, it trails OPRO, MiGrATe, and BOPRO.
After SFT, all baselines except MiGrATe show a drop in performance, promoting BOReFT to the top-2 on all properties.

\vspace{1em}
\begin{takeaway}{Takeaway}
On both Semantle and molecule property optimization, searching a learned manifold with Bayesian optimization is able to generalize beyond the task performance of its training data, and can outperform other LLM search strategies (ICL, TTT, tree-search). BOReFT's performance, however, remains coupled to the quality of training data (the representation gap of \cref{sec:theory-prob-setup}), which may prevent it from accessing high-scoring distributions, such as on DRD2, and motivates the need for continual learning to update the learned manifold as new observations are collected \citep{desanti2026actflow}, which we leave as future work.
\end{takeaway}

\begin{table}[t]
\centering
\small
\setlength{\tabcolsep}{3pt}
\begin{tabular}{@{}lcccccc@{}}
\toprule
& \multicolumn{3}{c}{\textbf{Before SFT}} & \multicolumn{3}{c}{\textbf{After SFT}} \\
\cmidrule(lr){2-4} \cmidrule(lr){5-7}
\textbf{Method} & DRD2\,($\uparrow$) & GSK3$\beta$\,($\uparrow$) & JNK3\,($\uparrow$) & DRD2\,($\uparrow$) & GSK3$\beta$\,($\uparrow$) & JNK3\,($\uparrow$) \\
\midrule
Random & $0.346/0.717$ & $0.310/0.590$ & $0.106/0.110$ & $0.235/0.403$ & $0.220/0.260$ & $0.094/0.120$ \\
SDPO-TTT$^\dagger$ & --\,/\,$\underline{0.981}$ & --\,/\,$0.280$ & --\,/\,$0.180$ & --\,/\,$0.780$ & --\,/\,$0.340$ & --\,/\,$0.190$ \\
AutoDiscovery & $0.534/0.929$ & $0.316/0.380$ & $0.158/0.160$ & $0.264/\underline{0.801}$ & $0.226/0.290$ & $0.094/0.160$ \\
MiGrATe & $\underline{0.880}/\mathbf{1.000}$ & $\underline{0.452}/0.580$ & $0.216/0.260$ & $\mathbf{0.844}/\mathbf{0.958}$ & $\mathbf{0.526}/\mathbf{0.710}$ & $\underline{0.218}/0.260$ \\
BOPRO & $0.253/\mathbf{1.000}$ & $\mathbf{0.470}/\underline{0.630}$ & $0.284/\underline{0.420}$ & $0.086/0.152$ & $0.322/0.520$ & $0.100/0.130$ \\
OPRO & $\mathbf{0.965}/\mathbf{1.000}$ & $0.402/0.430$ & $\underline{0.356}/0.410$ & $0.145/0.425$ & $0.275/0.360$ & $0.180/\underline{0.320}$ \\
\midrule
\textbf{BOReFT} & $0.645/0.782$ & $0.446/\mathbf{0.670}$ & $\mathbf{0.370}/\mathbf{0.940}$ & $\underline{0.645}/0.782$ & $\underline{0.446}/\underline{0.670}$ & $\mathbf{0.370}/\mathbf{0.940}$ \\
\bottomrule
\end{tabular}
\caption{\textbf{Molecular property optimization.}
Mean and best (\texttt{mean/max}) activation probability over 5 seeds at a budget of $500$ evaluations. 
Bold and underline mark the best and second-best. 
BOReFT scores are repeated under before-SFT for clarity.
$^\dagger$\textit{SDPO-TTT did not finish all five seeds.}}
\label{tab:molopt-search}
\end{table}

\begin{figure}[t]
    \centering
    \includegraphics[width=0.8\linewidth]{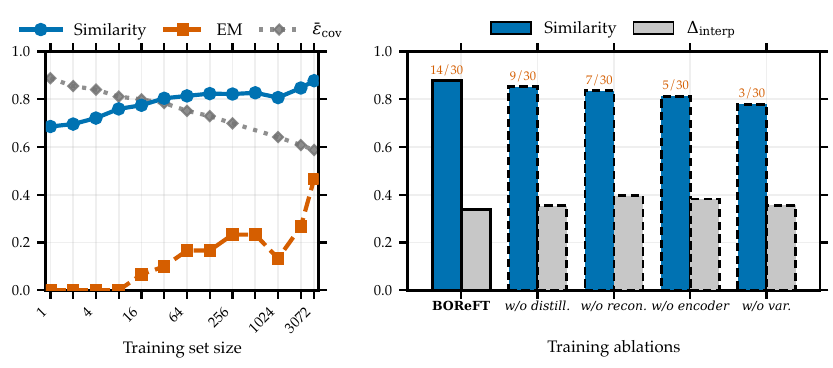}
    \vspace{-0.5em}
    \caption{\textbf{Training-set size and method ablations.}
    \textbf{Left:} Mean best similarity and EM on Semantle, along with median coverage error over 30 runs across train and test.
    \textbf{Right:} Mean best similarity and median interpolation distance $\Delta_\text{interp}$ on the same 30 runs, with EM counts in orange.}
    \label{fig:search-n}
\end{figure}

\vspace{2em}
\paragraph{What components of BOReFT help search?}
Full BOReFT finds $14/30$ hidden targets across train and test.
Removing self-distillation, reconstruction, or variational training lowers that count to $9/30$, $7/30$, and $3/30$, respectively, and removing the shared encoder lowers it to $5/30$ (\cref{fig:search-n}, right). This underscores the importance of each loss component, as well as the geometric bias from the encoder, in learning a suitable space for search
(see \cref{tab:search-breadth,fig:search-cosine}).

\paragraph{Do coverage and interpolation move with search?}
\Cref{prop:theory-prob-interp} bounds the representational gap by $L_f(\epscov + \epsint)$.
\Cref{fig:search-n}(left) therefore plots coverage error\footnote{We report an upper-bound on $\epscov$ (\cref{eq:theory-prob-epsint}) using the distance of a sequence to its nearest train embedding.} beside the search curves.
On $928$ held-out words, the median coverage error falls from $0.89$ at $N=1$ to $0.59$ at $N=3072$, while search improves correspondingly.
The interpolation error is the expected distance of one decoded sequence from the semantic interpolant $x_\alpha$.
\Cref{fig:search-n}(right) reports a related but smaller quantity $\Delta_\text{interp}$, the distance from the average decoded embedding to $x_\alpha$\footnote{By Jensen's inequality, it is at most $\epsint$ (\cref{app:prob-interp}).}.
We measure this halfway between pairs of training words, where the interpolant is farthest from either word. On 40 such pairs,
$\Delta_\text{interp}$ is $0.338$, while removing self-distillation, variational training, reconstruction, or the shared encoder all raise it to between $0.355$ and $0.396$ and reduce search quality (\cref{tab:search-breadth}).

\FloatBarrier
\section{Conclusion}
\label{sec:conclusion}
We introduce BOReFT, a generative search method for language models that first learns a low-dimensional continuous space of representation interventions, then uses Bayesian optimization to search that space a black-box objective. 
Our analysis characterizes when the space contains high-quality solutions and when adaptive search over it is well behaved, while experiments on Semantle and molecular property optimization show gains over baselines. 
Future work may consider how to expand the learned space with new observations.

\subsection*{AI use statement}
In this work, we used generative AI tools to generate the Semantle word definitions and the ChEBI-20 semantic categories, to assist with the theory proofs, and to help write and edit code.
All generated definitions, categories, proofs, and code were manually reviewed and edited before being incorporated.
We take responsibility for the final content of this work, including text, claims, and artifacts produced with the aid of generative AI.

\subsection*{Reproducibility statement}
Assumptions and complete proofs of the theoretical claims are given in \cref{sec:theory-prob} and \cref{app:proofs-prob}.
The method is specified in \cref{sec:method}.
Models, representation-training data, and evaluation protocols are described in \cref{app:experiments}.
Anonymized source code is included in the supplementary material.

\subsection*{Acknowledgements}
We thank Jeevana Kruthi Karnuthala and Mayank Gupta for their work during the early phase of this project, and Dhruvesh Patel and Kyle Richardson for providing feedback on early drafts.

\bibliography{refs}

\begin{thebibliography}{52}
\providecommand{\natexlab}[1]{#1}
\providecommand{\url}[1]{\texttt{#1}}
\expandafter\ifx\csname urlstyle\endcsname\relax
  \providecommand{\doi}[1]{doi: #1}\else
  \providecommand{\doi}{doi: \begingroup \urlstyle{rm}\Url}\fi

\bibitem[Agarwal et~al.(2025{\natexlab{a}})Agarwal, Arivazhagan, Das, Swamy, Khosla, and Gangadharaiah]{agarwal2025bopro}
Dhruv Agarwal, Manoj~Ghuhan Arivazhagan, Rajarshi Das, Sandesh Swamy, Sopan Khosla, and Rashmi Gangadharaiah.
\newblock Searching for optimal solutions with llms via bayesian optimization.
\newblock In \emph{International Conference on Learning Representations}, 2025{\natexlab{a}}.

\bibitem[Agarwal et~al.(2025{\natexlab{b}})Agarwal, Majumder, Adamson, Chakravorty, Gavireddy, Parashar, Surana, Mishra, McCallum, Sabharwal, and Clark]{agarwal2025autodiscovery}
Dhruv Agarwal, Bodhisattwa~Prasad Majumder, Reece Adamson, Megha Chakravorty, Satvika~Reddy Gavireddy, Aditya Parashar, Harshit Surana, Bhavana~Dalvi Mishra, Andrew McCallum, Ashish Sabharwal, and Peter Clark.
\newblock Autodiscovery: Open-ended scientific discovery via bayesian surprise.
\newblock In \emph{Advances in Neural Information Processing Systems}, 2025{\natexlab{b}}.

\bibitem[Aky{\"u}rek et~al.(2025)Aky{\"u}rek, Damani, Zweiger, Qiu, Guo, Pari, Kim, and Andreas]{akyurek2025surprising}
Ekin Aky{\"u}rek, Mehul Damani, Adam Zweiger, Linlu Qiu, Han Guo, Jyothish Pari, Yoon Kim, and Jacob Andreas.
\newblock The surprising effectiveness of test-time training for few-shot learning.
\newblock In \emph{Proceedings of the 42nd International Conference on Machine Learning}, volume 267 of \emph{Proceedings of Machine Learning Research}, pp.\  942--963. PMLR, 2025.

\bibitem[Ament et~al.(2023)Ament, Daulton, Eriksson, Balandat, and Bakshy]{ament2023unexpected}
Sebastian Ament, Samuel Daulton, David Eriksson, Maximilian Balandat, and Eytan Bakshy.
\newblock Unexpected improvements to expected improvement for {B}ayesian optimization.
\newblock In \emph{Advances in Neural Information Processing Systems (NeurIPS)}, 2023.

\bibitem[{Anthropic}(2026)]{anthropic2026claudeopus}
{Anthropic}.
\newblock System card: {Claude Opus} 5.
\newblock \url{https://www.anthropic.com/claude-opus-5-system-card}, 2026.
\newblock July 24.

\bibitem[Bran et~al.(2025)Bran, Xie, Pranesh, Meng, Nguyen, Goumaz, Segura, Xu, Zhou, Zhang, et~al.]{bran2025mist}
Andres~M Bran, Tong Xie, Shai Pranesh, Jeffrey Meng, Xuan~Vu Nguyen, Jeremy Goumaz, David~Ming Segura, Ruizhi Xu, Dongzhan Zhou, Wenjie Zhang, et~al.
\newblock Mist: Understanding the role of mid-stage scientific training in developing chemical reasoning models.
\newblock \emph{arXiv preprint arXiv:2512.21231}, 2025.

\bibitem[Brown et~al.(2024)Brown, Juravsky, Ehrlich, Clark, Le, R{\'e}, and Mirhoseini]{brown2024monkeys}
Bradley Brown, Jordan Juravsky, Ryan Ehrlich, Ronald Clark, Quoc~V. Le, Christopher R{\'e}, and Azalia Mirhoseini.
\newblock Large language monkeys: Scaling inference compute with repeated sampling.
\newblock \emph{arXiv preprint arXiv:2407.21787}, 2024.

\bibitem[Chen et~al.(2024)Chen, Chen, Goldstein, Huang, and Zhou]{chen2024instructzero}
Lichang Chen, Jiuhai Chen, Tom Goldstein, Heng Huang, and Tianyi Zhou.
\newblock {InstructZero}: Efficient instruction optimization for black-box large language models.
\newblock In \emph{International Conference on Machine Learning (ICML)}, 2024.

\bibitem[De~Santi et~al.(2026)De~Santi, Lee, Perez~Jensen, Protopapas, Tang, Liu, Chatterjee, Yue, and Krause]{desanti2026actflow}
Riccardo De~Santi, Bruce Lee, Cristian Perez~Jensen, Kimon Protopapas, Sophia Tang, Cheng-Hao Liu, Pranam Chatterjee, Yisong Yue, and Andreas Krause.
\newblock Active flow expansion for out-of-distribution discovery: from theory to molecules.
\newblock \emph{arXiv preprint arXiv:2606.08802}, 2026.

\bibitem[Deng et~al.(2025)Deng, Chang, and Chen]{deng2025distributionwise}
Chunyuan Deng, Ruidi Chang, and Hanjie Chen.
\newblock Learning distribution-wise control in representation space for language models.
\newblock In \emph{International Conference on Machine Learning (ICML)}, volume 267 of \emph{Proceedings of Machine Learning Research}, pp.\  13044--13068, 2025.

\bibitem[Edwards et~al.(2021)Edwards, Zhai, and Ji]{edwards2021text2mol}
Carl Edwards, ChengXiang Zhai, and Heng Ji.
\newblock {Text2Mol}: Cross-modal molecule retrieval with natural language queries.
\newblock In \emph{Proceedings of the 2021 Conference on Empirical Methods in Natural Language Processing}, pp.\  595--607. Association for Computational Linguistics, 2021.
\newblock \doi{10.18653/v1/2021.emnlp-main.47}.

\bibitem[Frazier(2018)]{frazier2018tutorial}
Peter~I. Frazier.
\newblock A tutorial on bayesian optimization.
\newblock \emph{arXiv preprint arXiv:1807.02811}, 2018.

\bibitem[Gao et~al.(2022)Gao, Fu, Sun, and Coley]{gao2022sample}
Wenhao Gao, Tianfan Fu, Jimeng Sun, and Connor~W. Coley.
\newblock Sample efficiency matters: A benchmark for practical molecular optimization.
\newblock In \emph{Advances in Neural Information Processing Systems (NeurIPS)}, 2022.

\bibitem[Geiger et~al.(2024)Geiger, Wu, Potts, Icard, and Goodman]{geiger2024das}
Atticus Geiger, Zhengxuan Wu, Christopher Potts, Thomas Icard, and Noah~D. Goodman.
\newblock Finding alignments between interpretable causal variables and distributed neural representations.
\newblock In \emph{Proceedings of the Third Conference on Causal Learning and Reasoning (CLeaR)}, 2024.

\bibitem[G{\'o}mez-Bombarelli et~al.(2018)G{\'o}mez-Bombarelli, Wei, Duvenaud, Hern{\'a}ndez-Lobato, S{\'a}nchez-Lengeling, Sheberla, Aguilera-Iparraguirre, Hirzel, Adams, and Aspuru-Guzik]{gomez2018automatic}
Rafael G{\'o}mez-Bombarelli, Jennifer~N. Wei, David Duvenaud, Jos{\'e}~Miguel Hern{\'a}ndez-Lobato, Benjam{\'i}n S{\'a}nchez-Lengeling, Dennis Sheberla, Jorge Aguilera-Iparraguirre, Timothy~D. Hirzel, Ryan~P. Adams, and Al{\'a}n Aspuru-Guzik.
\newblock Automatic chemical design using a data-driven continuous representation of molecules.
\newblock \emph{ACS Central Science}, 4\penalty0 (2):\penalty0 268--276, 2018.

\bibitem[Gurnee et~al.(2026)Gurnee, Sofroniew, Pearce, Piotrowski, Kauvar, Chen, Soligo, Bogdan, Ong, Wang, Thompson, Abrahams, Kantamneni, Ameisen, Batson, and Lindsey]{gurnee2026verbalizable}
Wes Gurnee, Nicholas Sofroniew, Adam Pearce, Mateusz Piotrowski, Isaac Kauvar, Runjin Chen, Anna Soligo, Paul Bogdan, Euan Ong, Rowan Wang, Ben Thompson, David Abrahams, Subhash Kantamneni, Emmanuel Ameisen, Joshua Batson, and Jack Lindsey.
\newblock Verbalizable representations form a global workspace in language models.
\newblock \emph{arXiv preprint arXiv:2607.15495}, 2026.

\bibitem[Hardt \& Sun(2024)Hardt and Sun]{hardt2024test}
Moritz Hardt and Yu~Sun.
\newblock Test-time training on nearest neighbors for large language models.
\newblock In \emph{The Twelfth International Conference on Learning Representations}, 2024.

\bibitem[Hinton et~al.(2015)Hinton, Vinyals, and Dean]{hinton2015distilling}
Geoffrey Hinton, Oriol Vinyals, and Jeff Dean.
\newblock Distilling the knowledge in a neural network.
\newblock \emph{arXiv preprint arXiv:1503.02531}, 2015.

\bibitem[Hu et~al.(2022)Hu, Shen, Wallis, Allen-Zhu, Li, Wang, Wang, and Chen]{hu2022lora}
Edward~J. Hu, Yelong Shen, Phillip Wallis, Zeyuan Allen-Zhu, Yuanzhi Li, Shean Wang, Lu~Wang, and Weizhu Chen.
\newblock {LoRA}: Low-rank adaptation of large language models.
\newblock In \emph{International Conference on Learning Representations (ICLR)}, 2022.

\bibitem[Huang et~al.(2021)Huang, Fu, Gao, Zhao, Roohani, Leskovec, Coley, Xiao, Sun, and Zitnik]{huang2021tdc}
Kexin Huang, Tianfan Fu, Wenhao Gao, Yue Zhao, Yusuf Roohani, Jure Leskovec, Connor~W. Coley, Cao Xiao, Jimeng Sun, and Marinka Zitnik.
\newblock Therapeutics data commons: Machine learning datasets and tasks for drug discovery and development.
\newblock In \emph{Proceedings of the Neural Information Processing Systems Track on Datasets and Benchmarks}, 2021.

\bibitem[Huben et~al.(2024)Huben, Cunningham, Smith, Ewart, and Sharkey]{cunningham2024sparse}
Robert Huben, Hoagy Cunningham, Logan~Riggs Smith, Aidan Ewart, and Lee Sharkey.
\newblock Sparse autoencoders find highly interpretable features in language models.
\newblock In \emph{International Conference on Learning Representations (ICLR)}, 2024.

\bibitem[H{\"u}botter et~al.(2026)H{\"u}botter, L{\"u}beck, Behric, Baumann, Bagatella, Marta, Hakimi, Shenfeld, Buening, Guestrin, and Krause]{hubotter2026reinforcement}
Jonas H{\"u}botter, Frederike L{\"u}beck, Lejs~Deen Behric, Anton Baumann, Marco Bagatella, Daniel Marta, Ido Hakimi, Idan Shenfeld, Thomas~Kleine Buening, Carlos Guestrin, and Andreas Krause.
\newblock Reinforcement learning via self-distillation.
\newblock In \emph{Forty-third International Conference on Machine Learning}, 2026.
\newblock URL \url{https://openreview.net/forum?id=QkfkxyRizZ}.

\bibitem[Kingma \& Welling(2014)Kingma and Welling]{kingma2014vae}
Diederik~P. Kingma and Max Welling.
\newblock Auto-encoding variational bayes.
\newblock In \emph{International Conference on Learning Representations (ICLR)}, 2014.

\bibitem[Kristiadi et~al.(2024)Kristiadi, Strieth-Kalthoff, Skreta, Poupart, Aspuru-Guzik, and Pleiss]{kristiadi2024sober}
Agustinus Kristiadi, Felix Strieth-Kalthoff, Marta Skreta, Pascal Poupart, Al{\'a}n Aspuru-Guzik, and Geoff Pleiss.
\newblock A sober look at {LLMs} for material discovery: Are they actually good for {B}ayesian optimization over molecules?
\newblock In \emph{International Conference on Machine Learning (ICML)}, 2024.

\bibitem[Locatello et~al.(2019)Locatello, Bauer, Lucic, R{\"a}tsch, Gelly, Sch{\"o}lkopf, and Bachem]{locatello2019challenging}
Francesco Locatello, Stefan Bauer, Mario Lucic, Gunnar R{\"a}tsch, Sylvain Gelly, Bernhard Sch{\"o}lkopf, and Olivier Bachem.
\newblock Challenging common assumptions in the unsupervised learning of disentangled representations.
\newblock In \emph{International Conference on Machine Learning (ICML)}, 2019.

\bibitem[Maus et~al.(2022)Maus, Jones, Moore, Kusner, Bradshaw, and Gardner]{maus2022local}
Natalie Maus, Haydn~T. Jones, Juston~S. Moore, Matt~J. Kusner, John Bradshaw, and Jacob~R. Gardner.
\newblock Local latent space {B}ayesian optimization over structured inputs.
\newblock In \emph{Advances in Neural Information Processing Systems (NeurIPS)}, 2022.

\bibitem[{Meta AI}(2024)]{meta2024llama32}
{Meta AI}.
\newblock Llama 3.2 model card.
\newblock Hugging Face model card, 2024.
\newblock URL \url{https://huggingface.co/meta-llama/Llama-3.2-1B-Instruct}.

\bibitem[Park et~al.(2024)Park, Choe, and Veitch]{park2024linear}
Kiho Park, Yo~Joong Choe, and Victor Veitch.
\newblock The linear representation hypothesis and the geometry of large language models.
\newblock In \emph{Proceedings of the 41st International Conference on Machine Learning}, volume 235 of \emph{Proceedings of Machine Learning Research}, pp.\  39643--39666. PMLR, 2024.

\bibitem[Phan et~al.(2025)Phan, Agarwal, Srinivas, Samulowitz, Kapanipathi, and McCallum]{phan2025migrate}
Peter Phan, Dhruv Agarwal, Kavitha Srinivas, Horst Samulowitz, Pavan Kapanipathi, and Andrew McCallum.
\newblock Migrate: Mixed-policy grpo for adaptation at test-time.
\newblock \emph{arXiv preprint arXiv:2508.08641}, 2025.

\bibitem[Polyanskiy \& Wu(2025)Polyanskiy and Wu]{polyanskiy2024information}
Yury Polyanskiy and Yihong Wu.
\newblock \emph{Information Theory: From Coding to Learning}.
\newblock Cambridge University Press, 2025.

\bibitem[Rankovi{\'c} \& Schwaller(2025)Rankovi{\'c} and Schwaller]{rankovic2025gollum}
Bojana Rankovi{\'c} and Philippe Schwaller.
\newblock {GOLLuM}: Gaussian process optimized {LLMs} -- reframing {LLM} finetuning through {B}ayesian optimization.
\newblock In \emph{ICLR Workshop on World Models}, 2025.

\bibitem[Rankovi{\'c} \& Schwaller(2026)Rankovi{\'c} and Schwaller]{rankovic2026ggollum}
Bojana Rankovi{\'c} and Philippe Schwaller.
\newblock Large language models as generative {B}ayesian policies.
\newblock In \emph{ICML Workshop on AI for Science}, 2026.
\newblock OpenReview: OSlXZGcur9.

\bibitem[Rasmussen(2003)]{rasmussen2003gaussian}
Carl~Edward Rasmussen.
\newblock Gaussian processes in machine learning.
\newblock In \emph{Summer school on machine learning}, pp.\  63--71. Springer, 2003.

\bibitem[Rimsky et~al.(2024)Rimsky, Gabrieli, Schulz, Tong, Hubinger, and Turner]{rimsky2024steering}
Nina Rimsky, Nick Gabrieli, Julian Schulz, Meg Tong, Evan Hubinger, and Alexander Turner.
\newblock Steering llama 2 via contrastive activation addition.
\newblock In \emph{Proceedings of the 62nd Annual Meeting of the Association for Computational Linguistics (Volume 1: Long Papers)}, pp.\  15504--15522. Association for Computational Linguistics, 2024.
\newblock \doi{10.18653/v1/2024.acl-long.828}.

\bibitem[Shahriari et~al.(2016)Shahriari, Swersky, Wang, Adams, and de~Freitas]{shahriari2016taking}
Bobak Shahriari, Kevin Swersky, Ziyu Wang, Ryan~P. Adams, and Nando de~Freitas.
\newblock Taking the human out of the loop: A review of bayesian optimization.
\newblock \emph{Proceedings of the IEEE}, 104\penalty0 (1):\penalty0 148--175, 2016.
\newblock \doi{10.1109/JPROC.2015.2494218}.

\bibitem[Siivola et~al.(2021)Siivola, Paleyes, Gonz{\'a}lez, and Vehtari]{siivola2021good}
Eero Siivola, Andrei Paleyes, Javier Gonz{\'a}lez, and Aki Vehtari.
\newblock Good practices for bayesian optimization of high dimensional structured spaces.
\newblock \emph{Applied AI Letters}, 2\penalty0 (2):\penalty0 e24, 2021.

\bibitem[Subramani et~al.(2022)Subramani, Suresh, and Peters]{subramani2022steering}
Nishant Subramani, Nivedita Suresh, and Matthew~E. Peters.
\newblock Extracting latent steering vectors from pretrained language models.
\newblock In \emph{Findings of the Association for Computational Linguistics: ACL 2022}, pp.\  566--581. Association for Computational Linguistics, 2022.
\newblock \doi{10.18653/v1/2022.findings-acl.48}.

\bibitem[Sun et~al.(2020)Sun, Wang, Liu, Miller, Efros, and Hardt]{sun2020testtime}
Yu~Sun, Xiaolong Wang, Zhuang Liu, John Miller, Alexei~A. Efros, and Moritz Hardt.
\newblock Test-time training with self-supervision for generalization under distribution shifts.
\newblock In \emph{Proceedings of the 37th International Conference on Machine Learning}, volume 119 of \emph{Proceedings of Machine Learning Research}, pp.\  9229--9248. PMLR, 2020.

\bibitem[Templeton et~al.(2024)Templeton, Conerly, Marcus, Lindsey, Bricken, Chen, Pearce, Citro, Ameisen, Jones, Cunningham, Turner, McDougall, MacDiarmid, Tamkin, Durmus, Hume, Mosconi, Freeman, Sumers, Rees, Batson, Jermyn, Carter, Olah, and Henighan]{templeton2024scaling}
Adly Templeton, Tom Conerly, Jonathan Marcus, Jack Lindsey, Trenton Bricken, Brian Chen, Adam Pearce, Craig Citro, Emmanuel Ameisen, Andy Jones, Hoagy Cunningham, Nicholas~L. Turner, Callum McDougall, Monte MacDiarmid, Alex Tamkin, Esin Durmus, Tristan Hume, Francesco Mosconi, C.~Daniel Freeman, Theodore~R. Sumers, Edward Rees, Joshua Batson, Adam Jermyn, Shan Carter, Chris Olah, and Tom Henighan.
\newblock Scaling monosemanticity: Extracting interpretable features from {Claude} 3 {Sonnet}.
\newblock \emph{Transformer Circuits Thread}, 2024.

\bibitem[Wang et~al.(2023)Wang, Wei, Schuurmans, Le, Chi, Narang, Chowdhery, and Zhou]{wang2023selfconsistency}
Xuezhi Wang, Jason Wei, Dale Schuurmans, Quoc Le, Ed~H. Chi, Sharan Narang, Aakanksha Chowdhery, and Denny Zhou.
\newblock Self-consistency improves chain of thought reasoning in language models.
\newblock In \emph{International Conference on Learning Representations (ICLR)}, 2023.

\bibitem[Weininger(1988)]{weininger1988smiles}
David Weininger.
\newblock Smiles, a chemical language and information system. 1. introduction to methodology and encoding rules.
\newblock \emph{Journal of Chemical Information and Computer Sciences}, 28\penalty0 (1):\penalty0 31--36, 1988.

\bibitem[Wilson et~al.(2016)Wilson, Hu, Salakhutdinov, and Xing]{wilson2016deep}
Andrew~Gordon Wilson, Zhiting Hu, Ruslan Salakhutdinov, and Eric~P. Xing.
\newblock Deep kernel learning.
\newblock In \emph{Proceedings of the 19th International Conference on Artificial Intelligence and Statistics}, volume~51 of \emph{Proceedings of Machine Learning Research}, pp.\  370--378. PMLR, 2016.

\bibitem[Wolpert(1996)]{wolpert1996lack}
David~H. Wolpert.
\newblock The lack of a priori distinctions between learning algorithms.
\newblock \emph{Neural Computation}, 8\penalty0 (7):\penalty0 1341--1390, 1996.

\bibitem[Wolpert \& Macready(1997)Wolpert and Macready]{wolpert1997no}
David~H. Wolpert and William~G. Macready.
\newblock No free lunch theorems for optimization.
\newblock \emph{IEEE Transactions on Evolutionary Computation}, 1\penalty0 (1):\penalty0 67--82, 1997.

\bibitem[Wu et~al.(2024)Wu, Arora, Wang, Geiger, Jurafsky, Manning, and Potts]{wu2024reft}
Zhengxuan Wu, Aryaman Arora, Zheng Wang, Atticus Geiger, Dan Jurafsky, Christopher~D. Manning, and Christopher Potts.
\newblock {ReFT}: Representation finetuning for language models.
\newblock In \emph{Advances in Neural Information Processing Systems (NeurIPS)}, 2024.

\bibitem[Wurgaft et~al.(2026)Wurgaft, Rager, Kowal, Shyam, Feucht, Bhalla, Haklay, Bigelow, Sarfati, McGrath, Lewis, Merullo, Goodman, Fel, Geiger, and Lubana]{wurgaft2026manifold}
Daniel Wurgaft, Can Rager, Matthew Kowal, Vasudev Shyam, Sheridan Feucht, Usha Bhalla, Tal Haklay, Eric Bigelow, Raphael Sarfati, Thomas McGrath, Owen Lewis, Jack Merullo, Noah~D. Goodman, Thomas Fel, Atticus Geiger, and Ekdeep~Singh Lubana.
\newblock Manifold steering reveals the shared geometry of neural network representation and behavior.
\newblock \emph{arXiv preprint arXiv:2605.05115}, 2026.

\bibitem[Yan et~al.(2026)Yan, Li, Hu, Wang, Cui, Qu, Cheng, and Zhang]{yan2026learning}
Jianhao Yan, Yafu Li, Zican Hu, Zhi Wang, Ganqu Cui, Xiaoye Qu, Yu~Cheng, and Yue Zhang.
\newblock Learning to reason under off-policy guidance.
\newblock \emph{Advances in Neural Information Processing Systems}, 38:\penalty0 117157--117186, 2026.

\bibitem[Yang et~al.(2024)Yang, Wang, Lu, Liu, Le, Zhou, and Chen]{yang2024opro}
Chengrun Yang, Xuezhi Wang, Yifeng Lu, Hanxiao Liu, Quoc~V. Le, Denny Zhou, and Xinyun Chen.
\newblock Large language models as optimizers.
\newblock In \emph{International Conference on Learning Representations}, 2024.

\bibitem[Yao et~al.(2023)Yao, Yu, Zhao, Shafran, Griffiths, Cao, and Narasimhan]{yao2023tree}
Shunyu Yao, Dian Yu, Jeffrey Zhao, Izhak Shafran, Thomas~L. Griffiths, Yuan Cao, and Karthik Narasimhan.
\newblock Tree of thoughts: Deliberate problem solving with large language models.
\newblock In \emph{Advances in Neural Information Processing Systems}, volume~36, 2023.

\bibitem[Zhang et~al.(2025)Zhang, Li, Long, Zhang, Lin, Yang, Xie, Yang, Liu, Lin, Huang, and Zhou]{zhang2025qwen3embedding}
Yanzhao Zhang, Mingxin Li, Dingkun Long, Xin Zhang, Huan Lin, Baosong Yang, Pengjun Xie, An~Yang, Dayiheng Liu, Junyang Lin, Fei Huang, and Jingren Zhou.
\newblock Qwen3 embedding: Advancing text embedding and reranking through foundation models.
\newblock \emph{arXiv preprint arXiv:2506.05176}, 2025.

\bibitem[Zhao et~al.(2026)Zhao, Xie, Liu, Huang, Pang, Chen, and Grover]{zhao2026selfdistilled}
Siyan Zhao, Zhihui Xie, Mengchen Liu, Jing Huang, Guan Pang, Feiyu Chen, and Aditya Grover.
\newblock Self-distilled reasoner: On-policy self-distillation for large language models.
\newblock In \emph{Forty-third International Conference on Machine Learning}, 2026.
\newblock URL \url{https://openreview.net/forum?id=Jpxfof0EaS}.

\bibitem[Zou et~al.(2023)Zou, Phan, Chen, Campbell, Guo, Ren, Pan, Yin, Mazeika, Dombrowski, Goel, Li, Byun, Wang, Mallen, Basart, Koyejo, Song, Fredrikson, Kolter, and Hendrycks]{zou2023repe}
Andy Zou, Long Phan, Sarah Chen, James Campbell, Phillip Guo, Richard Ren, Alexander Pan, Xuwang Yin, Mantas Mazeika, Ann-Kathrin Dombrowski, Shashwat Goel, Nathaniel Li, Michael~J. Byun, Zifan Wang, Alex Mallen, Steven Basart, Sanmi Koyejo, Dawn Song, Matt Fredrikson, J.~Zico Kolter, and Dan Hendrycks.
\newblock Representation engineering: A top-down approach to {AI} transparency.
\newblock \emph{arXiv preprint arXiv:2310.01405}, 2023.

\end{thebibliography}
\bibliographystyle{iclr2027_conference}

\appendix
\section{Proofs and experimental definitions}
\label{app:proofs-prob}

This appendix proves the results stated in \cref{sec:theory-prob}, gives the formal versions of the statements that \cref{sec:theory-prob-training} summarizes in words, and defines the diagnostics reported in \cref{sec:experiments}. 

\paragraph{Conventions.}
The search domain $\Bdom \subset \R^r$ is compact with nonempty interior. Each logit map $b \mapsto L_{p,u}(b)$ is continuous in $b$, because every frozen operation after the intervention site is continuous. Sampling at temperature $\tau > 0$ draws $s_t \sim p^\tau_b(\cdot \mid s_{<t}) \propto e^{L_{s_{<t},\cdot}(b)/\tau}$ and stops at the first $\eos$ or at a length cap $T_{\max}$. At $\tau = 0$, sampling reduces to greedy decoding, so $P^0_b(s)$ equals one when greedy decoding at $b$ emits $s$ and zero otherwise. Every target sequence is $\eos$-terminated: $s = (s_1, \dots, s_T)$ with $s_T = \eos$, $s_t \neq \eos$ for $t < T$, and $T \le T_{\max}$. With this convention the event that a run emits exactly $s$ coincides with the trajectory event, so $P^\tau_b(s) = \prod_{t=1}^{T} p^\tau_b(s_t \mid s_{<t})$ with the terminal $\eos$ factor included, and sequences truncated at the length cap are covered by treating $t = T_{\max}$ as a terminal step. Two consequences used below are that no target is a prefix of another, and that distinct terminated sequences correspond to disjoint emission events. Wherever a greedy argmax appears we break exact ties by a fixed total order on $\Vocab$, so that the greedy decode map $\PhiMap_0$ is defined everywhere. We call a code $b$ \emph{nondegenerate for $s$} if no logit difference $L_{s_{<t},s_t}(b) - L_{s_{<t},v}(b)$ with $v \neq s_t$ vanishes along the trajectory of $s$ at $b$, and statements about greedy decoding are made at such codes.

\paragraph{Reliable reachability.}
Some statements below concern what the learned space can emit with confidence rather than on average, so we record that notion here. At $\tau > 0$ every sequence has positive probability at every code, so the support of $P^\tau_b$ does not distinguish what the space can express. We instead fix a confidence level $\theta \in (0,1)$ and define the \emph{reachable set}
\begin{equation}
\label{eq:app-prob-reach}
\Reach_\tau(\theta) \;:=\; \big\{ s : \sup_{b \in \Bdom} P^\tau_b(s) \ge \theta \big\},
\end{equation}
the sequences that some code in the domain emits with probability at least $\theta$ in a single sample. When that set is nonempty we write $f^\star_{\mathrm{reach}}(\tau,\theta) := \max\{ f(s) : s \in \Reach_\tau(\theta) \}$ for the best score among its members. The two views are related. A code that emits a sequence $s$ with probability at least $\theta$ has $F_\tau(b) \ge \theta f(s) + (1-\theta) f_{\min}$, so taking the best such sequence gives
\begin{equation}
\label{eq:app-prob-reach-f}
F^\star_\tau \;\ge\; \theta\, f^\star_{\mathrm{reach}}(\tau,\theta) \;+\; (1-\theta)\, f_{\min} .
\end{equation}
The main text works with $F_\tau$ throughout, because that is the quantity a query observes.

\subsection{Reconstruction alone gives no held-out coverage \texorpdfstring{(\cref{prop:theory-prob-nfl})}{(Proposition A.1)}}
\label{app:prob-nfl}

\Cref{sec:theory-prob-training} states that fidelity to the training targets carries no information about held-out sequences, and this subsection makes that statement precise. Coverage is measured through the reachable set of \cref{eq:app-prob-reach}, because at positive temperature every held-out sequence has positive probability at every code and coverage in terms of support alone would be trivial. For a finite set $U$ of held-out targets we define the \emph{oracle coverage} $\chi_{\mathrm{oracle}}(U; \tau, \theta) := |\Reach_\tau(\theta) \cap U|/|U|$, which upper-bounds the fraction of $U$ that any search over $\Bdom$ could reliably elicit at confidence $\theta$.

\begin{proposition}[No coverage guarantee from reconstruction alone]
\label{prop:theory-prob-nfl}
Fix rank $r$, a compact $\Bdom$ with nonempty interior, distinct codes $\mu_1, \dots, \mu_n \in \Bdom$, targets $s_1, \dots, s_n$, and a finite held-out set $U$ of sequences disjoint from the targets. For any $\tau \ge 0$ and $\theta \in (1/2, 1)$, there exists no $g(n, r) > 0$ that lower-bounds $\chi_{\mathrm{oracle}}(U;\tau,\theta)$ over the class of decoders with smooth logit maps that satisfy $P^\tau_{\mu_i}(s_i) \ge \theta$ for all $i$.
\end{proposition}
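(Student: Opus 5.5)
The strategy is to exhibit, for every admissible configuration, one decoder in the class whose oracle coverage is exactly zero. That rules out any positive lower bound $g(n,r)$. The decoder will satisfy the reconstruction constraints $P^\tau_{\mu_i}(s_i)\ge\theta$ while giving every held-out $u\in U$ probability strictly below $\theta$ at every $b\in\Bdom$. Since $\theta>1/2$, it suffices to arrange that at every code some training target has probability at least $\theta$ or, more simply, that every sequence outside $\{s_1,\dots,s_n\}$ has probability below $1-\theta<1/2<\theta$ in a controlled way. In fact we will construct the decoder so that at every $b\in\Bdom$ the emitted distribution puts mass at least $\theta$ on a single training target. Disjointness of emission events (from the Conventions) then forces $P^\tau_b(u)\le 1-\theta<\theta$ for all $u\in U$, so $\Reach_\tau(\theta)\cap U=\emptyset$.

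The construction is a smooth partition of $\Bdom$ into target-assignments. The $\mu_i$ are distinct, so we pick radius $\rho>0$ with disjoint balls $B(\mu_i,\rho)$. We then choose smooth functions $w_i:\R^r\to[0,1]$ that form a partition of unity on $\Bdom$ with $w_i(\mu_i)=1$. A single target must dominate everywhere, and a partition of unity does not immediately give that. To get it, we instead fix one default target, say $s_1$, and let target $i$ dominate only on $B(\mu_i,\rho/2)$.

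Concretely, for each prefix $p$ along a target trajectory, we define the logits as
\[
L_{p,v}(b)=K\sum_i \chi_i(b)\,\mathbb{1}[v=s_{i,|p|+1},\,p=s_{i,\le |p|}] + K\,\chi_0(b)\,\mathbb{1}[v=s_{1,|p|+1},\,p=s_{1,\le|p|}].
\]
Here $\chi_i$ is a smooth bump equal to $1$ near $\mu_i$ and $0$ outside $B(\mu_i,\rho)$, and $\chi_0=1-\sum_{i\ge2}\chi_i$ is chosen so that at each $b$ the set of ``active'' next-token rules is consistent with one trajectory having margin $K$ at every step. This requires care where bumps overlap with the default, so it is cleaner to make the bumps have disjoint supports and interpolate by a continuous switch. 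The switching step is the delicate part: during the transition between $s_1$ and $s_i$, neither target may reach probability $\theta$, and a held-out sequence could then be the mode.

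To handle the transition regions, the plan is to not insist that a training target dominate everywhere. We bound held-out mass directly instead. We make all logits of tokens off the union of target trajectories equal to $-K$ relative to on-trajectory tokens, uniformly in $b$. Any $u\in U$ then leaves the target trie at some step and pays a factor $\le |\Vocab|e^{-2K/\tau}$, or ends at a trie node that is not a target terminal. Since targets are $\eos$-terminated and no target is a prefix of another, we assign $\eos$ a large negative logit at non-terminal trie nodes. It remains to treat $u$ that agree with a target up to a branching point of the trie. Every step of $u$ is then either forced or off-trie, so $P^\tau_b(u)\le |\Vocab|e^{-2K/\tau}$ uniformly. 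Taking $K$ large makes this $<\theta$, while at $\mu_i$ the target $s_i$ has probability at least $(1+|\Vocab|e^{-2K/\tau})^{-T_{\max}}\ge\theta$.

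For $\tau=0$ the same logits give greedy decoding that emits only trie-terminal sequences, i.e. training targets; nondegeneracy holds at the $\mu_i$. We will also check that the logit maps are smooth, since they are sums of bump functions times constants, and that the rank-$r$ structure imposes no obstruction, because the class quantifies over decoders with arbitrary smooth logit maps of $b\in\R^r$. The main obstacle is the branching-point accounting in the trie. We must ensure that wherever two targets share a prefix, the probability split is handled by the bumps, and that no held-out $u$ can stay inside the trie and terminate with high probability. The prefix-freeness convention and the disjointness of $U$ from the targets are exactly what exclude this.
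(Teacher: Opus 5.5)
Your final construction (the last two paragraphs, after you drop the ``one target dominates everywhere'' idea) is correct and is essentially the paper's proof: bumps at $\mu_i$ on the trie edges of $s_i$, a $b$-independent penalty on tokens that leave the training trie, and the fact that $\eos$-termination, prefix-freeness and $U\cap\{s_1,\dots,s_n\}=\emptyset$ force every held-out sequence to take such a leaving step (so the branching-point case you worry about is not a separate case). The differences are cosmetic: the paper penalizes only the first leaving edges of held-out sequences and uses Gaussian bumps with a leaving factor below $1/2$ (which is where $\theta>1/2$ enters), whereas you penalize all off-trie tokens and use disjointly supported bumps; note also that your exponents should read $e^{-K/\tau}$ rather than $e^{-2K/\tau}$, since the relevant margins (off-trie versus on-trie, and the correct token versus other branches at $\mu_i$) are only $K$, which does not affect the argument.
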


The number of training targets and the rank therefore give no guarantee of held-out coverage. The construction uses the prefix tree of the training targets, so it applies to sequences that share prefixes and does not assume that the first token determines the output.

\begin{proof}[Proof of \cref{prop:theory-prob-nfl}]
Let $E$ be the set of prefix--token edges along the training sequences, and write $Z$ for the first edge at which each held-out sequence leaves that tree. Such an edge exists: a path that stayed on the tree through the terminal token would equal a training target. At that node the tree has some other continuation.

Fix a scale $a > 0$, small distinct tie-breaking offsets $c_v \in [0, a/4]$, and smooth bumps $\varphi_i(b) := \exp(-\norm{b - \mu_i}_2^2/\sigma^2)$. Define
\begin{equation}
\label{eq:app-prob-nfl-logits}
L_{p,v}(b) \;:=\; c_v \;-\; a\,\mathbf{1}\{(p,v) \in Z\} \;+\; 2a \sum_{i \,:\, (p,v) \in E_i} \varphi_i(b),
\end{equation}
where $E_i$ is the set of edges along training target $s_i$. These maps are smooth in $b$.

At $\mu_i$, the correct token of $s_i$ receives a bump of $2a$ at every step. Once $\sigma$ is small enough that every other bump is at most $1/(8n)$ there, each competitor is at most $a/2$, so the correct token leads by at least $3a/2$. A softmax step with that lead has conditional probability at least $1 - (K-1)e^{-(3a/2)/\tau}$ when $\tau > 0$, and a product over at most $T_{\max}$ steps is at least $\theta$ once $a \ge \tau\log\frac{T_{\max}(K-1)}{1-\theta}$. At $\tau = 0$ the same lead makes greedy decoding emit $s_i$. Thus every training target is reconstructed at confidence $\theta$.

A held-out leaving token is penalized by $a$ at every code and receives no bump, while the tree's alternative continuation is not penalized. The leaving token therefore trails by at least $3a/4$ everywhere, so its conditional probability is below $1/2$. The probability of the whole held-out sequence is at most that factor, hence below $\theta$ at every code. At $\tau = 0$, greedy decoding takes the alternative continuation and leaves the held-out trajectory. No positive function of $n$ and $r$ can therefore lower-bound $\chi_{\mathrm{oracle}}$ over this class.
\end{proof}

The decoder in the proof is a smooth map from codes to logits, a class larger than a transformer with a rank-$r$ intervention. The proposition says that smoothness together with high-confidence reconstruction leaves held-out coverage undetermined. The penalty is applied at the first prefix where a held-out sequence leaves the training tree, so the same construction covers targets that share a long prefix with a training sequence. 

\subsection{The prior and distillation terms \texorpdfstring{(\cref{lem:theory-prob-train})}{(Lemma A.2)}}
\label{app:prob-training}

This subsection states and proves the two consequences summarized in \cref{sec:theory-prob-training}. Throughout, $q_i = \mathcal{N}(\mu_i, \Sigma_i)$ is the Gaussian variational posterior for target $s_i$, with diagonal covariance in our implementation, $p_0 = \mathcal{N}(0, \sigma_0^2 I_r)$ is the shared prior, and $\Bdom$ is the axis-aligned bounding box of the means $\{\mu_1, \dots, \mu_n\}$. As in \cref{sec:method}, $P_b$ denotes the decoder distribution $P^\tau_b$ at the training temperature $\tau = 1$: teacher-forced reconstruction evaluates exactly $-\log P_b(s_i)$ under the termination convention, and the distillation term scores the student at the same temperature (the implementation exposes a separate softmax temperature for distillation, which we take at its default value of one). We use two standard facts \citep{polyanskiy2024information}: Pinsker's inequality $\TV(P,Q) \le \sqrt{\KL(P\|Q)/2}$ in natural logarithms, and joint convexity of both $\KL$ and $\TV$ in their arguments.

The first consequence follows from the closed form of the Gaussian divergence, and the second from Pinsker's inequality together with convexity of relative entropy in its second argument.

\begin{lemma}[Consequences of the prior and distillation terms]
\label{lem:theory-prob-train}
Write $\bar P_i(\cdot) := \E_{b \sim q_i}[P_b(\cdot)]$ for the student sequence distribution of target $i$ averaged over its posterior. Then:
\begin{enumerate}[leftmargin=*, itemsep=0pt, topsep=2pt, label=(\roman*)]
    \item if $\KL(q_i \| p_0) \le \kappa_i$ for every target $i$, then each mean satisfies $\norm{\mu_i}_2 \le \sigma_0 \sqrt{2\kappa_i}$, and with $\kappa_{\max} := \max_i \kappa_i$ the search domain satisfies $\Bdom \subseteq [-R, R]^r$ for $R := \sigma_0\sqrt{2\kappa_{\max}}$ and $\diam(\Bdom) \le 2\sigma_0\sqrt{2 r \kappa_{\max}}$;
    \item if $\E_{b \sim q_i}[\KL(Q_i \| P_b)] \le \eta_i$ for a target $i$, then $\TV(Q_i, \bar P_i) \le \sqrt{\eta_i/2}$. Hence $\bar P_i(A) \ge Q_i(A) - \sqrt{\eta_i/2}$ for every set $A$ of terminated sequences, for instance the sequences that express a given meaning, and $\tfrac{1}{2}\sum_{k \le m} |Q_i(A_k) - \bar P_i(A_k)| \le \sqrt{\eta_i/2}$ for any disjoint sets $A_1, \dots, A_m$.
\end{enumerate}
\end{lemma}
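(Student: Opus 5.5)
For part (i), I will use the closed form of the Gaussian KL divergence against the isotropic prior $p_0=\mathcal{N}(0,\sigma_0^2 I_r)$:
\[
\KL(q_i\|p_0)=\tfrac12\Big[\tr(\Sigma_i)/\sigma_0^2+\norm{\mu_i}_2^2/\sigma_0^2-r-\log\det\Sigma_i+r\log\sigma_0^2\Big].
\]
This splits into a mean term $\norm{\mu_i}_2^2/(2\sigma_0^2)$ and a covariance term $\tfrac12\sum_j\big(\lambda_j/\sigma_0^2-1-\log(\lambda_j/\sigma_0^2)\big)$, where the $\lambda_j$ are the eigenvalues of $\Sigma_i$. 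Each summand of the covariance term has the form $x-1-\log x\ge 0$ for $x>0$, so the covariance term is nonnegative. Hence $\norm{\mu_i}_2^2\le 2\sigma_0^2\kappa_i$, which gives $\norm{\mu_i}_2\le\sigma_0\sqrt{2\kappa_i}$.

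Each coordinate then satisfies $|\mu_{i,j}|\le\norm{\mu_i}_2\le R$. Every interval $[\min_i\mu_{i,j},\max_i\mu_{i,j}]$ therefore lies in $[-R,R]$, so $\Bdom\subseteq[-R,R]^r$. The diameter of this cube is $2R\sqrt r=2\sigma_0\sqrt{2r\kappa_{\max}}$, and the diameter bound follows by monotonicity. A tighter per-coordinate bound on the box is available, but it is not needed.

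For part (ii), I will first apply convexity of $\KL$ in its second argument (Jensen over $b\sim q_i$) to get $\KL(Q_i\|\bar P_i)\le\E_{b\sim q_i}\KL(Q_i\|P_b)\le\eta_i$. Pinsker's inequality then gives $\TV(Q_i,\bar P_i)\le\sqrt{\eta_i/2}$. The set bound follows from $|Q_i(A)-\bar P_i(A)|\le\TV$ for every measurable $A$. For disjoint $A_1,\dots,A_m$, I will use the data-processing inequality for TV under the coarsening map that sends a sequence to the index $k$ of the set containing it, or to a leftover cell if it lies in none. The TV distance of the pushforward distributions is $\tfrac12$ times the sum of absolute differences over all cells. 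This dominates the partial sum over $k\le m$ and is itself bounded by $\TV(Q_i,\bar P_i)$.

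The main subtlety is not a difficult inequality but bookkeeping in (ii). The hypothesis is stated at the sequence level, $\KL(Q_i\|P_b)$, whereas the training loss \cref{eq:method-distill} is a token-level, on-policy average. I will treat the lemma as conditional on the sequence-level hypothesis, noting that the chain rule equates sequence KL with the sum of expected per-token KLs under $Q_i$'s prefixes. I also need $\bar P_i$ to be a well-defined distribution over the finite set of terminated sequences; this holds by the conventions, since $b\mapsto P_b(s)$ is continuous and bounded, so the mixture is measurable.
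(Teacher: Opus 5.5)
Your proposal is correct and follows the paper's proof essentially step for step. For (i) you use the closed-form Gaussian divergence with its nonnegative covariance term $x-1-\log x$; for (ii) you use convexity of $\KL$ in its second argument, Pinsker's inequality, and the pushforward (data-processing) argument onto set indices with a leftover cell. Your side remarks, that the hypothesis is sequence-level rather than the token-level training loss and that $\bar P_i$ is well defined, are consistent with the paper's own caveat that the logged distillation loss is not $\eta_i$.
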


In words, part (i) says that the prior term controls the geometric extent of the domain that is later searched, which is the statement summarized in \cref{sec:theory-prob-training}. Part (ii) says that the distillation term makes the student's output distribution, averaged over the posterior, track the teacher's on every set of sequences simultaneously. A teacher that spreads its mass over several disjoint sets of sequences therefore forces that average to assign mass to each of them. The quantity $\eta_i$ is an expected sequence-level forward KL. The implemented distillation loss averages per-token divergences on a mixture of teacher samples, student rollouts, and the target, so the logged loss is not $\eta_i$.

\begin{proof}[Proof of \cref{lem:theory-prob-train}]
\emph{(i).} For the Gaussians $q_i = \mathcal{N}(\mu_i, \Sigma_i)$ and $p_0 = \mathcal{N}(0, \sigma_0^2 I_r)$ the divergence has the closed form
\begin{equation}
\label{eq:app-prob-gausskl}
\KL(q_i \,\|\, p_0)
\;=\;
\tfrac{1}{2}\Big( \norm{\mu_i}_2^2/\sigma_0^2 \;+\; \tr(\Sigma_i)/\sigma_0^2 \;-\; r \;-\; \log\det(\Sigma_i/\sigma_0^2) \Big),
\end{equation}
whose covariance contribution equals $\tfrac{1}{2}\sum_{j \le r} (\lambda_j - \log \lambda_j - 1)$ in terms of the eigenvalues $\lambda_j$ of $\Sigma_i/\sigma_0^2$ and is therefore nonnegative, since $x - \log x - 1 \ge 0$ for $x > 0$. Hence $\norm{\mu_i}_2^2 \le 2\sigma_0^2 \kappa_i$ for every target. Every coordinate of every mean then satisfies $|\mu_{ij}| \le \norm{\mu_i}_2 \le \sigma_0\sqrt{2\kappa_{\max}} = R$, so the bounding box of the means lies in $[-R, R]^r$, whose Euclidean diameter is $2R\sqrt{r}$, and the box's diameter is at most that. The same covariance term is nonnegative and diverges as any eigenvalue of $\Sigma_i/\sigma_0^2$ tends to $0$ or to $\infty$, so a small divergence also keeps the posterior spread comparable to the prior's.

\emph{(ii).} The averaged distribution $\bar P_i$ is a mixture of the $P_b$ over the mixing measure $q_i$, and the teacher $Q_i$ does not depend on $b$, so convexity of relative entropy in its second argument gives
\begin{equation}
\KL(Q_i \,\|\, \bar P_i) \;\le\; \E_{b \sim q_i}\big[\KL(Q_i \,\|\, P_b)\big] \;\le\; \eta_i .
\end{equation}
Pinsker's inequality then gives $\TV(Q_i, \bar P_i) \le \sqrt{\eta_i/2}$. The bound $\bar P_i(A) \ge Q_i(A) - \sqrt{\eta_i/2}$ is the definition of total variation applied to the event $A$. For disjoint sets $A_1, \dots, A_m$, the map sending a sequence to the index of the set containing it (with one extra index for sequences in none of them) is a deterministic function of the sample, so the induced distributions on indices are pushforwards of $Q_i$ and $\bar P_i$. Total variation cannot increase under a pushforward, and for distributions on a finite set it equals half the $\ell_1$ distance, which gives $\tfrac{1}{2}\sum_{k \le m} |Q_i(A_k) - \bar P_i(A_k)| \le \TV(Q_i, \bar P_i) \le \sqrt{\eta_i/2}$.
\end{proof}

\subsection{Proof of \texorpdfstring{\cref{prop:theory-prob-interp}}{Proposition 4.1}}
\label{app:prob-interp}

\begin{proof}[Proof of \cref{prop:theory-prob-interp}]
Fix a sequence $s$ and a weight $\alpha \in \mathcal{A}$, and abbreviate $\E_S$ for the expectation over $S \sim P^\tau_{b_\alpha}$. Since $F_\tau(b_\alpha) = \E_S[f(S)]$, the Lipschitz hypothesis applied inside the expectation gives
\begin{equation}
f(s) - F_\tau(b_\alpha)
\;=\;
\E_S\big[ f(s) - f(S) \big]
\;\le\;
L_f\, \E_S \norm{\phi(s) - \phi(S)}_2 .
\end{equation}
For every sequence $S$ the triangle inequality in the embedding space gives $\norm{\phi(s) - \phi(S)}_2 \le \norm{\phi(s) - x_\alpha}_2 + \norm{\phi(S) - x_\alpha}_2$, and taking expectations gives
\begin{equation}
\label{eq:app-prob-interp-two}
f(s) - F_\tau(b_\alpha)
\;\le\;
L_f \Big( \norm{\phi(s) - x_\alpha}_2 + \E_S \norm{\phi(S) - x_\alpha}_2 \Big) .
\end{equation}
Each $b_\alpha$ is a convex combination of the means $\mu_i$, and the axis-aligned bounding box $\Bdom$ of those means is convex and contains them, so $b_\alpha \in \Bdom$ and $F^\star_\tau \ge F_\tau(b_\alpha)$. The second term of \cref{eq:app-prob-interp-two} is at most $L_f \epsint$, because $\epsint$ is a supremum over $\mathcal{A}$, so
\begin{equation}
F^\star_\tau \;\ge\; f(s) - L_f \norm{\phi(s) - x_\alpha}_2 - L_f \epsint
\end{equation}
for every $\alpha \in \mathcal{A}$. The left-hand side does not depend on $\alpha$, so taking the supremum of the right-hand side over $\mathcal{A}$ replaces $\norm{\phi(s) - x_\alpha}_2$ by its infimum, which is $\epscov(s)$.
\end{proof}

The bound is loose by construction, since it passes through a worst-case Lipschitz constant and discards the direction in which the decoded semantics deviate from the interpolant. It is a decomposition of the representational gap into two measurable parts, not a tight estimate.

\paragraph{Remark (scores that are linear in the embedding).}
When the score reads the embedding linearly, the interpolation term can be replaced by a smaller quantity. Semantle is of this form: its embeddings are normalized, the hidden target $s^\star$ has embedding $x^\star := \phi(s^\star)$, and $f(s) = \langle x^\star, \phi(s) \rangle$, so $L_f = 1$ and $f^\star = 1$ by the Cauchy--Schwarz inequality. Writing
\begin{equation}
\label{eq:app-prob-mtau}
m_\tau(b) \;:=\; \E_{S \sim P^\tau_b}\big[\phi(S)\big]
\end{equation}
for the mean semantics of what a code emits, we have $F_\tau(b) = \langle x^\star, m_\tau(b) \rangle$. Adding and subtracting $\langle x^\star, x_\alpha \rangle$, and applying the Cauchy--Schwarz inequality to the second term, gives
\begin{equation}
\label{eq:app-prob-interp-lin}
1 - F^\star_\tau
\;\le\;
\Big( 1 - \sup_{\alpha \in \mathcal{A}} \langle x^\star, x_\alpha \rangle \Big)
\;+\;
\sup_{\alpha \in \mathcal{A}} \norm{m_\tau(b_\alpha) - x_\alpha}_2 .
\end{equation}
The interpolation term of \cref{eq:app-prob-interp-lin} is the displacement of the mean decoded semantics rather than the expected spread around the interpolant, and by Jensen's inequality it is the smaller of the two. It is the quantity that the interpolation paths of \cref{sec:experiments} report. The coverage term of \cref{eq:app-prob-interp-lin} is linear in $\alpha$, so its optimum on the simplex is attained at a vertex and reduces to the nearest training target. The distance-based coverage error of \cref{eq:theory-prob-epsint} can credit a mixture. The experiments report the vertex form.

\paragraph{Remark (the shared encoder as an inductive bias).}
Nothing in \cref{eq:method-loss} makes $\epsint$ small. An isotropic prior in particular does not guarantee that individual code directions correspond to distinct semantic attributes of the decoded text, and alignment of that kind requires additional inductive biases on the model or the data \citep{locatello2019challenging}. The shared encoder of \cref{sec:method}, which produces each posterior from a semantic encoding of a privileged description of its target, is an inductive bias of this kind, because it ties the geometry of the codes to the geometry of those inputs.

\paragraph{Measuring the two errors.}
\Cref{prop:theory-prob-interp} holds for every nonempty set $\mathcal{A}$ of interpolation weights. A smaller set lowers $\epsint$ and raises $\epscov$, so any such choice keeps the bound valid, and the two errors move in opposite directions as weights are added. Restricting coverage to the vertices, the nearest training embedding, can only increase $\epscov$. The protocol used in the experiments is in \cref{app:experiments-interp}.

\subsection{Proofs of \texorpdfstring{\cref{prop:theory-prob-bo,cor:theory-prob-bo-best}}{Proposition 4.2 and Corollary 4.3}}
\label{app:prob-bo}

This subsection proves the two properties of the expected sampled score $F_\tau(b) := \sum_s P^\tau_b(s) f(s)$ used in \cref{sec:theory-prob-bo}. A query is an unbiased, conditionally sub-Gaussian observation of $F_\tau$, and cumulative regret controls the best sequence sampled during the run. The sum defining $F_\tau$ runs over the finitely many terminated sequences of length at most $T_{\max}$, and the task score is bounded, $f(s) \in [f_{\min}, f_{\max}]$ with range $\Delta_f := f_{\max} - f_{\min}$.

\paragraph{Observation model (\texorpdfstring{\cref{prop:theory-prob-bo}}{Proposition 4.2}).}
Conditionally on the history $\mathcal{H}_{t-1}$ and the chosen query $b_t$, the sequence $S_t$ is drawn from $P^\tau_{b_t}$, so $\E[Y_t \mid \mathcal{H}_{t-1}, b_t] = \sum_s P^\tau_{b_t}(s) f(s) = F_\tau(b_t)$ directly from the definition. For the noise $\xi_t := Y_t - F_\tau(b_t)$, the observation $Y_t$ takes values in an interval of width $\Delta_f$ and has conditional mean $F_\tau(b_t)$, so Hoeffding's lemma gives
\begin{equation}
\E\big[e^{\lambda \xi_t} \,\big|\, \mathcal{H}_{t-1}, b_t\big] \;\le\; e^{\lambda^2 \Delta_f^2 / 8}
\qquad\text{for every } \lambda \in \R,
\end{equation}
so $\xi_t$ is conditionally $(\Delta_f/2)$-sub-Gaussian. The bound holds conditionally on the query, so it is unaffected by the fact that $b_t$ is chosen adaptively from the history.

\paragraph{From cumulative regret to the best sampled sequence.}
\begin{proof}[Proof of \cref{cor:theory-prob-bo-best}]
We write $R_Q := \sum_{t \le Q} \big( F^\star_\tau - F_\tau(b_t) \big)$. Averaging the identity $Y_t = F_\tau(b_t) + \xi_t$ over $t \le Q$ gives
\begin{equation}
\frac{1}{Q} \sum_{t \le Q} Y_t \;=\; F^\star_\tau \;-\; \frac{R_Q}{Q} \;+\; \frac{1}{Q} \sum_{t \le Q} \xi_t .
\end{equation}
The noise variables form a martingale difference sequence with respect to the filtration generated by the histories, and each is conditionally $(\Delta_f/2)$-sub-Gaussian by the observation-model paragraph above, so the Azuma--Hoeffding inequality gives, with probability at least $1 - \delta$,
\begin{equation}
\frac{1}{Q} \sum_{t \le Q} \xi_t \;\ge\; -\frac{\Delta_f}{2}\sqrt{\frac{2\log(1/\delta)}{Q}} \;=\; -\Delta_f \sqrt{\frac{\log(1/\delta)}{2Q}} .
\end{equation}
Since $\max_{t \le Q} Y_t \ge \frac{1}{Q}\sum_{t \le Q} Y_t$ and $Y_t = f(S_t)$, combining the two displays gives the stated inequality.
\end{proof}

The corollary takes a cumulative-regret bound as an input.
Combining it with \cref{prop:theory-prob-interp} further requires that the score be Lipschitz in the embedding, which holds for Semantle and remains an assumption for the molecular predictors.

\newpage
\section{Experimental details}
\label{app:experiments}

This appendix records implementation and protocol details summarized in \cref{sec:experiments}.

\subsection{Models and representation training}
\label{app:experiments-models}

\paragraph{Semantle.}
BOReFT uses Llama-3.2-1B-Instruct \citep{meta2024llama32} as the frozen generative model and as the source of the semantic representation used to predict code posteriors. Each target--description pair is formatted as \texttt{The meaning of '\{word\}' is: \{definition\}}. To construct the posterior representation, we run the sequence through the frozen backbone up to the penultimate decoder layer, apply a copied final decoder block and the model's final RMSNorm, and read the representation at the last token of the definition span. A rank-$8$ LoRA on the copied final block is trained jointly with the posterior projection and is used only to construct code posteriors; it is not used by the generative model during search. The representation-training set contains $3072$ target words paired with dictionary-like descriptions.
For this run the three terms in \cref{eq:method-loss} are weighted by $1$, $\beta = 1$, and $\lambda = 1$.

\paragraph{Molecular property optimization.}
BOReFT uses the chemistry-pretrained Qwen2.5-3B checkpoint released with MiST \citep{bran2025mist} as the frozen generative model; in our experiments this checkpoint is identified as \texttt{qwen\_pretranined\_v6}. Representation training uses ChEBI-20 \citep{edwards2021text2mol}, which pairs canonical SMILES strings with natural-language descriptions. Our prepared corpus contains $8000$ pairs, shuffled once, and the primary experiments use a random $N=1024$ subsample drawn with seed $42$.
For the checkpoint reported in \cref{tab:molopt-search}, reconstruction, the prior, and self-distillation are weighted by $1$, $\beta = 10^{-3}$, and $\lambda = 1$.

Generation follows the MiST completion format in \cref{app:experiments-prompts}. The prompt ends in \texttt{[START\_SMILES]}, and the gold continuation ends in \texttt{[END\_SMILES]}. The semantic encoder $\phi$ is the frozen Qwen3-Embedding-0.6B model \citep{zhang2025qwen3embedding}. Each molecule--description pair is formatted as \texttt{The molecule '\{SMILES\}' is: \{description\}}, and we L2-normalize the resulting embeddings before applying the learned posterior projection $g_\psi$.

\paragraph{Intervention site.}
\label{app:experiments-site}
On both tasks the edit in \cref{eq:method-intervention} is applied once, at rank $r=64$.
The layer is the first transformer block, and the edited vector is that block's residual-stream output.
Later blocks decode from the edited activation.
The token is a dedicated marker written at the start of the task instruction.
On Semantle the marker is \texttt{<|reserved\_special\_token\_0|>}, and it is the first token of the user turn inside the chat template.
On molecular optimization the marker is \texttt{<|boreft\_0|>}.
It is prefixed to the completion string, the tokenizer then adds its beginning-of-sequence token, and \texttt{[START\_SMILES]} remains the last token of the prompt.

On both tasks, $g_\psi$ is a multilayer perceptron. A linear layer of width $256$, a ReLU, and a layer norm are followed by a linear layer of width $128$ and a ReLU. Separate linear heads then predict the posterior mean and log-variance in $\R^r$.

\subsection{Prompts}
\label{app:experiments-prompts}

Semantle uses Llama-3.2-1B-Instruct, an instruction-tuned model. Reconstruction and self-distillation are user turns of its chat template, and generation continues from the assistant header. Molecular property optimization uses the MiST Qwen2.5-3B checkpoint as a completion model. Those prompts are plain prefixes, and the model continues the same string. Each generative instruction below is prefixed with the marker token of \cref{app:experiments-site}. Embedding strings are plain text on both tasks.

\paragraph{Reconstruction.}
The student is trained to continue a task prompt with the target. The description is reserved for the teacher and the embedding string.

Semantle, the user turn:
\begin{quote}
\small\ttfamily\raggedright
Generate an English word (only the word, without any decoration or formatting).
\end{quote}

Molecular optimization, the completion prefix. The gold continuation is the SMILES string followed by \texttt{[END\_SMILES]}.
\begin{quote}
\small\ttfamily\raggedright
Here is a valid SMILES string for a molecule (only the SMILES string; no additional text): [START\_SMILES]
\end{quote}

\paragraph{Self-distillation.}
The teacher is the unintervened frozen model. Its prompt includes the description of the training target. The student is still conditioned on the reconstruction prompt.

Semantle, the user turn. \texttt{\{definition\}} is the dictionary text of the target word.
\begin{quote}
\small\ttfamily\raggedright
Here is the definition of an English word: \{definition\}\\
Generate a word that matches this definition (only the word, without any decoration or formatting).
\end{quote}

Molecular optimization. \texttt{\{definition\}} is the molecule description.
\begin{quote}
\small\ttfamily\raggedright
Here is the description of a molecule: \{definition\}\\
A valid SMILES string for such a molecule (only the SMILES string; no additional text): [START\_SMILES]
\end{quote}

\paragraph{Embedding.}
These strings are the inputs used to predict code posteriors.

Semantle. The string is read by the copied final block of Llama-3.2-1B-Instruct, at the last token of the definition span.
\begin{quote}
\small\ttfamily\raggedright
The meaning of '\{word\}' is: \{definition\}
\end{quote}

Molecular optimization. Qwen3-Embedding-0.6B encodes the string, and we L2-normalize the result before $g_\psi$.
\begin{quote}
\small\ttfamily\raggedright
The molecule '\{SMILES\}' is: \{description\}
\end{quote}

\paragraph{Definition generation.}
The Semantle definitions paired with the representation-training words were written by Claude Opus~5 \citep{anthropic2026claudeopus}.

\paragraph{Search.}
Semantle decodes from the reconstruction prompt above. The molecular runs in \cref{tab:molopt-search} replace it with an objective line followed by the same completion prefix, including \texttt{[START\_SMILES]}. For DRD2 the prompt is
\begin{quote}
\small\ttfamily\raggedright
The task is to optimize for DRD2 binding.\\
Here is a valid SMILES string for a molecule (only the SMILES string; no additional text): [START\_SMILES]
\end{quote}
GSK3$\beta$ and JNK3 keep that second line. Their first lines are ``The task is to optimize for GSK3$\beta$ (GSK3B) inhibition.'' and ``The task is to optimize for JNK3 inhibition.''

\subsection{Evaluation protocols}
\label{app:experiments-protocols}

\paragraph{Semantle.}
We evaluate ten hidden targets: five sampled from the $3072$ representation-training words and five from the held-out reconstruction-test pool, with three independent runs per target. Target selection uses seed $42$, and these words are distinct from the evaluation targets used in BOPRO and MiGrATe. Each run has a budget of $500$ objective evaluations. For each target and seed, all methods receive the same ten labeled warm-start word--score pairs, drawn from the training vocabulary and excluding the hidden target. BOReFT uses greedy decoding during Semantle search.
The surrogate is a Gaussian process with an ARD Mat\'ern-$2.5$ kernel and log expected improvement, unless varied in \cref{app:experiments-kernel}.
Before the kernel, each code is normalized to the search box and passed through one linear layer of width $64$ followed by an ELU.
That feature map is trained jointly with the Gaussian process, the deep-kernel construction of \citet{wilson2016deep} in the form used by GOLLuM \citep{rankovic2025gollum}.
Molecular search uses the same surrogate.
Unless varied in \cref{app:experiments-search-domain,app:experiments-temperature}, BOReFT searches the axis-aligned bounding box of the learned posterior means in \cref{eq:method-box}.

\paragraph{Molecular property optimization.}
We evaluate DRD2, GSK3$\beta$, and JNK3 using the molecular property scores described in \cref{sec:experiments-setup}. Invalid SMILES receive score $0$. All three oracles are class-1 probabilities in $[0, 1]$ (sklearn 1.4+ no longer normalizes TDC's stored leaf counts, so we restore that normalization at load time). Each method receives a budget of $500$ property evaluations and is evaluated over five independent seeds. We draw ten Sobol points in the learned BOReFT domain, decode them once at temperature $1$, and use the resulting molecules as the common warm-start proposals; their scores are computed separately under each objective. BOReFT samples at temperature $1$ during molecular search. \Cref{fig:molopt-search} shows the before- and after-SFT best-so-far curves for the runs in \cref{tab:molopt-search}.

\begin{figure}[t]
\centering
\includegraphics[width=\linewidth]{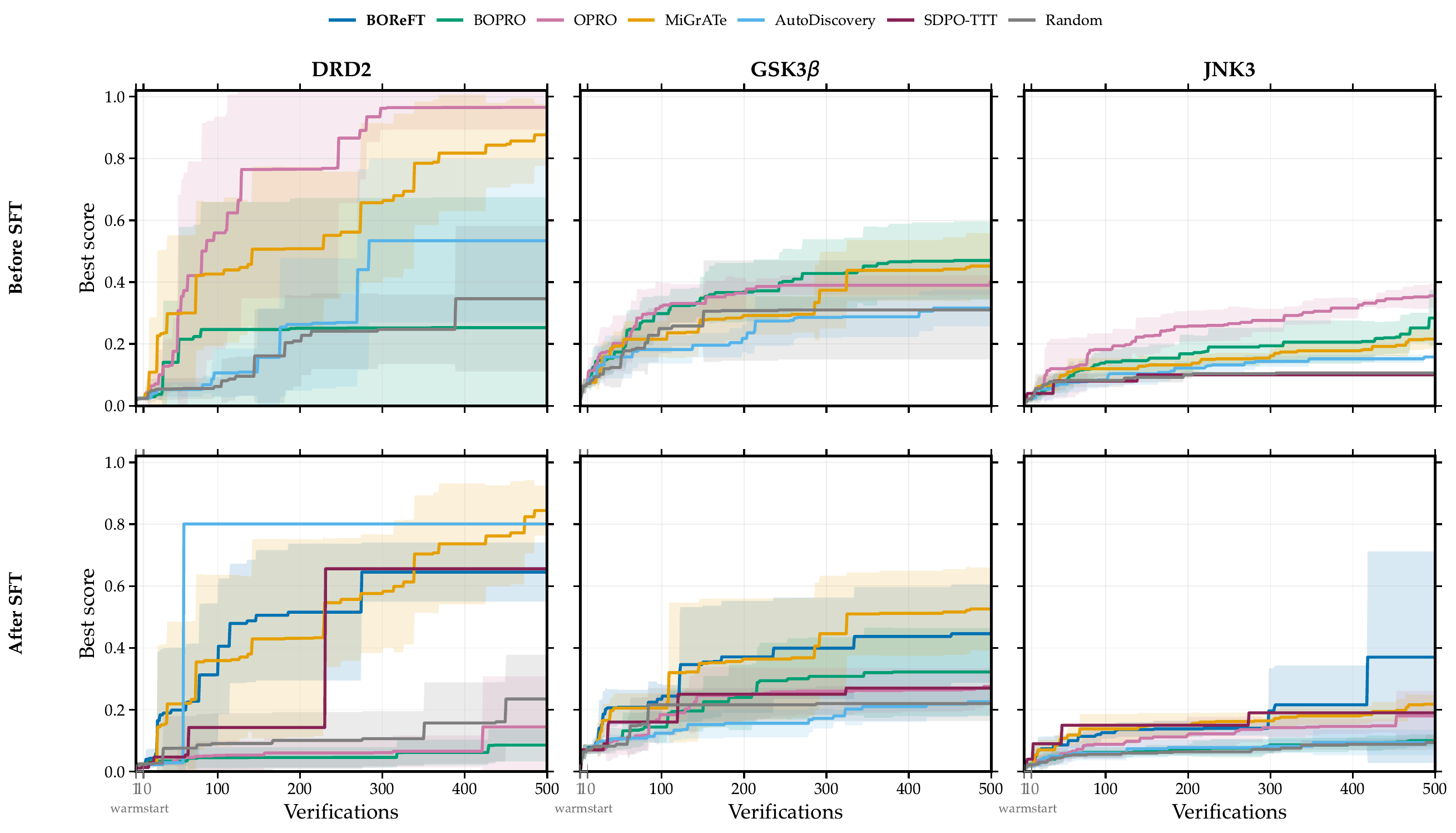}
\vspace{-0.5em}
\caption{\textbf{Search against baselines on molecular property optimization.}
Best-so-far property at each verification, mean $\pm$ 1 SD across seeds.
The top row is before SFT and the bottom row is after SFT.
The first ten evaluations are shared warm starts.
We plot BOReFT only after SFT; its scores are repeated in the before-SFT columns of \cref{tab:molopt-search}.
SDPO-TTT, AutoDiscovery, and MiGrATe are the no-instruction runs.
A curve with no band is a single finished seed.}
\label{fig:molopt-search}
\end{figure}

\subsection{Search baselines}
\label{app:experiments-baselines}

\Cref{tab:baseline-settings} lists the proposal rule and per-round query count used by each search method. Unless a row says otherwise, methods share the objective-evaluation budget and the same warm-start observations. On Semantle the base model is Llama-3.2-1B-Instruct; on molecular optimization it is the MiST chemistry-pretrained Qwen2.5-3B checkpoint of \cref{app:experiments-models}. Prompt-based methods sample at temperature $1$. Molecular completions use the MiST \texttt{[START\_SMILES]} / \texttt{[END\_SMILES]} format described above.

\begin{table}[t]
\centering
\small
\setlength{\tabcolsep}{4pt}
\begin{tabular}{llll}
\toprule
\textbf{Method} & \textbf{Update} & \textbf{Proposal rule} & \textbf{Scored / round} \\
\midrule
Random (base) & none & task prompt only & $1$ \\
Random (post-SFT) & none & same, after LoRA-SFT on the training set & $1$ \\
OPRO & none & scored-history in-context prompt & $1$ \\
BOPRO & GP over embeddings & OPRO prompt over $5$ nearest neighbors & $1$ \\
AutoDiscovery & UCB1 MCTS & OPRO prompt on the selected branch & $1$ of $8$ \\
MiGrATe & LoRA GRPO & mixed on-policy / greedy / neighborhood group & $4$ \\
SDPO-TTT & LoRA self-distillation & task prompt after a reverse-KL update & $1$ \\
Discrete BO & GP over the training set & acquisition over that finite pool & $1$ \\
BOReFT & GP over codes & decode from a queried code $b$ & $1$ \\
\bottomrule
\end{tabular}
\caption{\textbf{Search-method settings.}
All methods use the same base model within a task. AutoDiscovery expands eight completions per node and scores one; the rest remain as untried children. MiGrATe scores four new completions per round (two on-policy and two neighborhood); greedy group members are reused from history. BOPRO's GP is fit in the Qwen3 embedding space of observed solutions, not in the BOReFT code space.}
\label{tab:baseline-settings}
\end{table}

\subsection{Proposal behavior on Semantle}
\label{app:experiments-baseline-discussion}

\Cref{tab:search-baselines} shows that SDPO-TTT, AutoDiscovery, and MiGrATe recover at most one held-out target. This section describes the proposals behind those results. All three methods use the settings in \cref{tab:baseline-settings} and the same warm starts as the other Semantle runs. Averaged over the $30$ runs, the best warm-start similarity is $0.683$. Averages below are over those runs, and they exclude the warm-start proposals.

\paragraph{SDPO-TTT.}
Each proposal is sampled from the task description. That prompt does not list previously scored words. The reverse-KL update conditions the teacher on the most recent proposal and its score. Across these runs the most common proposal is the word ``cloud'', which is not a warm-start word. It accounts for $0.28$ of the first $30$ proposals and $0.32$ of the last $100$. In every run, at least $0.29$ of the proposals after the warm start are this word. The mean proposal similarity is $0.633$ over the first $50$ proposals and $0.630$ over the last $50$, below the warm-start best of $0.683$. Later proposals do not score higher than earlier ones. \Cref{tab:search-train} reports a best similarity of $0.763$ on training targets, and \cref{tab:search-baselines} reports $0.709$ on held-out targets, with no exact matches.

SDPO forms its teacher by conditioning the current model on feedback from the attempt, then distills that teacher into the student \citep{hubotter2026reinforcement}. The method is designed for rich environment feedback, such as a runtime error or a written judgment, which tells the teacher something the student did not see. When the environment returns only a scalar reward, the same work uses a successful attempt as feedback for the failures. Semantle returns only a scalar cosine similarity, and these runs never produce the hidden word for the teacher to condition on. The scalar attached to the latest word is the teacher's only additional context. The teacher distribution can therefore stay close to the unconditioned proposal distribution, and the distillation target supplies little additional supervision. This reading is consistent with proposal similarities that stay flat across the budget.

\paragraph{AutoDiscovery.}
Each proposal is drawn from a prompt built on the selected branch. The prompt lists scored words on that branch and asks for a word with a higher score. The mean proposal similarity is $0.624$ over the first $50$ proposals and $0.624$ over the last $50$, below the mean best warm-start similarity. A proposal exceeds the best score observed so far on $0.006$ of search steps. The repetition rate in \cref{tab:search-repetition} is $0.364$, so most proposals are new words. No run records an exact match. The final similarities are $0.772$ on training targets and $0.757$ on held-out targets.

Exploration is strong relative to exploitation. The search keeps introducing new words, and those words rarely improve on the best score already observed. That balance can suit the setting AutoDiscovery was introduced for, open-ended search over scientific hypotheses, where continued exploration is part of the goal \citep{agarwal2025autodiscovery}. Semantle asks for one hidden word, so search needs proposals that improve on the best score already observed. The runs here already move toward exploitation relative to the published defaults. The exploration constant is $C=0.5$ rather than $C=2.0$, and the prompt includes $20$ nodes from the selected branch rather than $3$. Even so, the mean proposal similarity does not rise. A different acquisition function, or a further change in these hyperparameters, might exploit high-scoring guesses more directly. We evaluate one fixed configuration and do not sweep those choices.

\paragraph{MiGrATe.}
MiGrATe updates a LoRA adapter after each group of proposals. The group mixes samples from the task prompt with variants of high-scoring words already observed \citep{phan2025migrate}. The mean proposal similarity rises from $0.637$ over the first $50$ proposals to $0.742$ over the last $50$. The fraction of proposals equal to ``cloud'' falls from $0.27$ in the first $30$ proposals to $0.03$ in the last $100$. The test-time GRPO update does change the proposal distribution on this task. Exact-match recovery remains low, at $0/15$ on training targets and $1/15$ on held-out targets, and the repetition rate is $0.678$.

The same update makes test-time search slower than BOReFT. Each round samples a group and takes a gradient step on the adapter. BOReFT fits a Gaussian process and decodes one code, with no gradient step. On these runs the mean elapsed time is $0.35$ hours for MiGrATe and $0.10$ hours for BOReFT, so MiGrATe takes about $3.5\times$ as long at the same budget of $500$ scores. The comparison is search time only. It does not include the cost of BOReFT's representation training.

\subsection{Search on training targets}
\label{app:experiments-sft-search}

\Cref{tab:search-train} reports exact match and mean best similarity when the hidden word is one of the five targets drawn from the representation-training set. The protocol matches \cref{tab:search-baselines}: three seeds, budget $500$, and the same ten warm starts. The after-training columns use one LoRA adapter, trained by supervised fine-tuning on the $3072$ representation-training words for 10 epochs, and shared by Random, SDPO-TTT, AutoDiscovery, MiGrATe, BOPRO, and OPRO. BOReFT does not use that adapter, so its entries repeat.

Held-out results for these runs are in \cref{tab:search-baselines}. None of the fine-tuned baselines matches BOReFT's $8/15$ held-out exact matches. On training targets, discrete BO reaches $15/15$ because those words are its candidate set. After fine-tuning, OPRO's training-target exact-match count falls from $7/15$ to $4/15$, while random sampling rises from $0/15$ to $3/15$.

Novelty on the pooled runs drops for AutoDiscovery, BOPRO, and OPRO, from $0.736$, $0.798$, and $0.708$ before fine-tuning to $0.154$, $0.203$, and $0.212$ after it (\cref{tab:search-baselines}). More of their proposals are words from the representation-training set. SDPO-TTT and MiGrATe stay near the base novelty rates.

\begin{table}[t]
\centering
\small
\setlength{\tabcolsep}{4pt}
\begin{tabular}{lcccc}
\toprule
& \multicolumn{2}{c}{\textbf{Before SFT}} & \multicolumn{2}{c}{\textbf{After SFT}} \\
\cmidrule(lr){2-3} \cmidrule(lr){4-5}
\textbf{Method} & Exact$\uparrow$ & Sim.$\uparrow$ & Exact$\uparrow$ & Sim.$\uparrow$ \\
\midrule
\textit{Discrete BO} & \textit{15/15} & \textit{1.000} & -- & -- \\
\noalign{\vskip\aboverulesep}
\cdashline{1-5}
\noalign{\vskip\belowrulesep}
Random & $0/15$ & $0.758$ & $3/15$ & $0.825$ \\
SDPO-TTT & $0/15$ & $0.763$ & $0/15$ & $0.766$ \\
AutoDiscovery & $0/15$ & $0.772$ & $2/15$ & $0.811$ \\
MiGrATe & $0/15$ & $0.788$ & $0/15$ & $0.782$ \\
BOPRO & $5/15$ & $\underline{0.878}$ & $\underline{5/15}$ & $0.857$ \\
OPRO & $\mathbf{7/15}$ & $\mathbf{0.886}$ & $4/15$ & $\underline{0.858}$ \\
\midrule
\textbf{BOReFT} & $\underline{6/15}$ & $0.863$ & $\mathbf{6/15}$ & $\mathbf{0.863}$ \\
\bottomrule
\end{tabular}
\caption{\textbf{Semantle search on training targets, before and after supervised fine-tuning.}
The table reports exact-match counts and mean best semantic similarity on five targets from the representation-training set, over fifteen runs.
The left block uses the frozen generator.
The right block uses the LoRA adapter of \cref{app:experiments-sft-search}.
BOReFT does not use that adapter.
Discrete BO searches the training vocabulary, so the right block has no entry for it.
In each column, bold marks the best value and underline marks the second best.
The italic row is a reference, and it is not included in that ranking.}
\label{tab:search-train}
\end{table}

\subsection{Proposal repetition}
\label{app:experiments-redundancy}

\Cref{tab:search-repetition} collects the repetition rate for the main comparisons on both tasks.
A proposal counts as a repeat when the same solution already occurred earlier in that run, including during the warm start.
Warm-start proposals are excluded from the denominator.
On Semantle we lower-case each word and collapse repeated whitespace, and the rate pools every post-warm-start proposal from the thirty runs on the training and held-out targets.
On the molecular task, valid SMILES strings are canonicalized before this comparison, so two strings for the same molecule are not counted as distinct. Invalidity is tracked separately from repetition.
Molecular rates pool the finished seeds on DRD2, GSK3$\beta$, and JNK3 from the runs in \cref{tab:molopt-search}.
Novelty, which asks whether a proposal falls outside the representation-training set, remains in \cref{tab:search-baselines} and is not repeated here.

\begin{table}[t]
\centering
\small
\setlength{\tabcolsep}{5pt}
\begin{tabular}{lcccc}
\toprule
& \multicolumn{2}{c}{\textbf{Semantle}} & \multicolumn{2}{c}{\textbf{Molecules}} \\
\cmidrule(lr){2-3} \cmidrule(lr){4-5}
\textbf{Method} & Before SFT & After SFT & Before SFT & After SFT \\
\midrule
\textit{Discrete BO} & \textit{0.000} & \textit{0.000} & -- & -- \\
\noalign{\vskip\aboverulesep}
\cdashline{1-5}
\noalign{\vskip\belowrulesep}
Random & $0.791$ & $\mathbf{0.201}$ & $0.212$ & $\underline{0.068}$ \\
SDPO-TTT & $0.750$ & $0.747$ & $0.176$ & $0.183$ \\
AutoDiscovery & $\mathbf{0.364}$ & $\underline{0.218}$ & $\underline{0.075}$ & $\mathbf{0.066}$ \\
MiGrATe & $0.678$ & $0.650$ & $0.112$ & $0.102$ \\
BOPRO & $0.527$ & $0.342$ & $\mathbf{0.039}$ & $0.094$ \\
OPRO & $0.540$ & $0.368$ & $0.223$ & $0.541$ \\
\midrule
\textbf{BOReFT} & $\underline{0.481}$ & $0.481$ & $0.133$ & $0.133$ \\
\bottomrule
\end{tabular}
\caption{\textbf{Proposal repetition before and after supervised fine-tuning.}
Each entry is the fraction of post-warm-start proposals that repeat an earlier solution in the same run.
Lower is better.
Semantle pools the training-target and held-out runs.
Molecular rates pool finished seeds on DRD2, GSK3$\beta$, and JNK3.
SDPO-TTT before SFT is the one finished JNK3 seed.
After SFT, SDPO-TTT is one finished seed on each property, and AutoDiscovery omits four unfinished DRD2 seeds.
Bold and underline mark the best and second-best rate in each column.
The italic row is a reference, and it is not ranked.
BOReFT does not use the fine-tuning adapter, so its entries repeat.}
\label{tab:search-repetition}
\end{table}

\subsection{Search-domain construction}
\label{app:experiments-search-domain}

The primary BOReFT runs search the axis-aligned bounding box (AABB) of the learned posterior means, \cref{eq:method-box}.
The variational posterior also produces a per-target standard deviation $\sigma_i$, so the same checkpoint admits other compact domains that still cover those means.
We evaluate two alternatives on Semantle, keeping the checkpoint, warm starts, budget, and acquisition unchanged.

The first generalizes the AABB to smoothly expand each coordinate proportional to its standard deviation,
\begin{equation}
\label{eq:app-box-std}
\prod_{j=1}^{r}
\left[
\min_i\bigl(\mu_{ij}-k\sigma_{ij}\bigr),\;
\max_i\bigl(\mu_{ij}+k\sigma_{ij}\bigr)
\right],
\end{equation}
where $k=0$ recovers the original bounds.
The second replaces the box with a covering ellipsoid \citep{siivola2021good} of the same means.
That ellipsoid is the Mahalanobis ball of their sample covariance, scaled so that every $\mu_i$ lies inside, and the acquisition is optimized over this convex set.

\Cref{tab:search-domain} reports the results together with each domain's Lebesgue volume relative to the mean box $V_0$ and the repetition rate.

Expanding the box by $k=0.1$ multiplies the volume by $4.71{\times}10^{5}$.
Exact-match recovery falls to $3/15$ on training targets and $4/15$ on held-out targets, and the repetition rate rises to $0.627$ and $0.577$.
At $k=0.5$ the volume is $1.08{\times}10^{21}$ times larger, no run recovers an exact match, and the repetition rate is $0.851$ on training targets and $0.844$ on held-out targets.
A full standard-deviation expansion ($k=1$) yields a volume $7.31{\times}10^{32}$ times that of the mean box and likewise recovers no exact matches.
Mean best similarity falls to $0.726$ on training targets and $0.707$ on held-out targets, which is comparable to random sampling from the frozen base model in \cref{tab:search-baselines}.
The repetition rate is then $0.943$ on training targets and $0.913$ on held-out targets.
On this checkpoint, the typical posterior standard deviation is on the order of the coordinate-wise span of the means, so even a modest $k$ in \cref{eq:app-box-std} enlarges the domain substantially.

The covering ellipsoid has smaller volume than the mean box ($4.08{\times}10^{-4}\,V_0$) and remains closer to it in search performance.
It finds $5/15$ training targets and $5/15$ held-out targets, against $6/15$ and $8/15$ for the mean box, with mean similarities $0.873$ and $0.852$ and repetition rates $0.594$ and $0.605$.
On this protocol, replacing the box with that covering ellipsoid does not improve exact-match recovery, while enlarging the box to include posterior spread reduces exact-match recovery and raises the repetition rate.

\begin{table}[t]
\centering
\small
\setlength{\tabcolsep}{3.5pt}
\begin{tabular}{lccccccc}
\toprule
& & \multicolumn{3}{c}{\textbf{Train}} & \multicolumn{3}{c}{\textbf{Held out}} \\
\cmidrule(lr){3-5} \cmidrule(lr){6-8}
\textbf{Search domain} & $V/V_0$ & Exact$\uparrow$ & Sim.$\uparrow$ & Rep.$\downarrow$ & Exact$\uparrow$ & Sim.$\uparrow$ & Rep.$\downarrow$ \\
\midrule
AABB & & & & & & & \\
\quad \textit{at $k=0$} & $1$ & $6/15$ & $0.863$ & $0.505$ & $8/15$ & $0.892$ & $0.452$ \\
\quad \textit{at $k=0.1$} & $4.71{\times}10^{5}$ & $3/15$ & $0.819$ & $0.627$ & $4/15$ & $0.836$ & $0.577$ \\
\quad \textit{at $k=0.5$} & $1.08{\times}10^{21}$ & $0/15$ & $0.758$ & $0.851$ & $0/15$ & $0.746$ & $0.844$ \\
\quad \textit{at $k=1$} & $7.31{\times}10^{32}$ & $0/15$ & $0.726$ & $0.943$ & $0/15$ & $0.707$ & $0.913$ \\
Covering ellipsoid & $4.08{\times}10^{-4}$ & $5/15$ & $0.873$ & $0.594$ & $5/15$ & $0.852$ & $0.605$ \\
\bottomrule
\end{tabular}
\caption{\textbf{Semantle search under alternative domains.}
Exact matches and mean best semantic similarity, using the protocol of \cref{tab:search-baselines} on the same checkpoint.
Repetition ($\downarrow$) uses the definition in \cref{tab:search-repetition}, pooled within each split: the fraction of post-warm-start proposals whose solution repeats an earlier proposal in the same run.
$V/V_0$ is the Lebesgue volume of the search domain relative to the mean box ($k=0$).
The axis-aligned bounding boxes follow \cref{eq:app-box-std}, and the covering ellipsoid is as described in \cref{app:experiments-search-domain} over the posterior means.}
\label{tab:search-domain}
\end{table}

\subsection{Geometric prior}
\label{app:experiments-encoder-pca}

The shared map $g_\psi$ predicts each code posterior from a frozen semantic representation of the representation-training input.
We call the geometry of these representations the geometric prior.
\Cref{fig:encoder-pca} shows the first two principal components of that prior.
The top row is the ChEBI-20 set of molecules with their descriptions, and the bottom row is the Semantle set of words with their definitions.
Within each row, points that share a semantic category use the same color.
The left column is Qwen3-Embedding-0.6B.
That encoder supplies $\phi$ on molecular optimization, and it is also the model used for the Semantle task score.
The right column is the task language model: the chemistry-pretrained MiST checkpoint on molecular optimization, and Llama-3.2-1B-Instruct on Semantle.
The Llama states are read from the last layer at the last instruction token, which is how $\phi$ is constructed on Semantle.

\begin{figure*}[t]
\centering
\includegraphics[width=\textwidth]{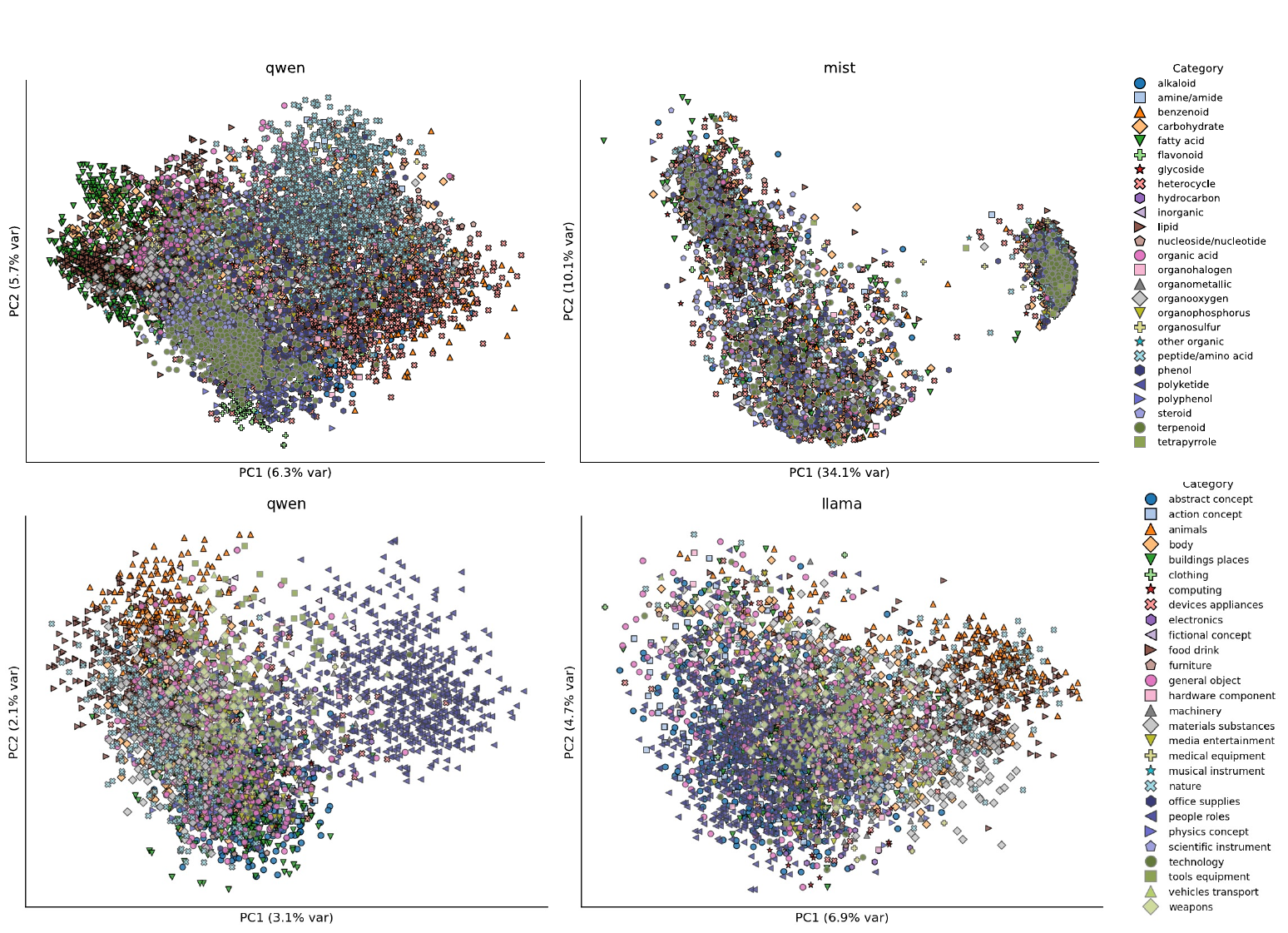}
\caption{\textbf{Geometric prior via semantic representations.}
First two principal components of the frozen semantic representations that supply the geometric prior.
The top row is molecular optimization and the bottom row is Semantle.
The left column is Qwen3-Embedding-0.6B.
The right column is the MiST chemistry-pretrained checkpoint on the top row and Llama-3.2-1B-Instruct on the bottom row.
Points are colored by semantic category, with one palette per row.
These are the representations passed to $g_\psi$.
The learned Semantle codes are shown separately in \cref{fig:interp-code-pca}.}
\label{fig:encoder-pca}
\end{figure*}

\subsection{Interpolation in the learned code space}
\label{app:experiments-interp}

\Cref{fig:interp-code-pca} shows qualitative interpolations in a PCA projection of the $3072$ Semantle training posterior means.
Paths run from ``research'' to four distant anchors, and temperature-sampled decodes along each path follow distinct semantic transitions.

To measure interpolation across the space, we decode temperature-$1$ samples along linear paths between $40$ training-target pairs at the mid-path weights $t\in\{0.4,0.5,0.6\}$.
At each weight we take the mean decoded embedding and its Euclidean distance to the semantic interpolant $x_\alpha$ of \cref{eq:theory-prob-interp}.
\Cref{tab:search-breadth} reports the median of these distances, written $\Delta_\text{interp}$.
An endpoint-only reference that always emits the nearer of the two training words, with no interpolation, has $\Delta_\text{interp}=0.365$.
BOReFT obtains $0.338$.
Removing self-distillation or variational training yields $0.355$ and $0.356$.
Removing reconstruction or the shared encoder yields $0.396$ and $0.382$, both larger than the endpoint-only reference.

\subsection{Effect of temperature on search}
\label{app:experiments-temperature}

The primary Semantle protocol decodes greedily ($T=0$).
We also search the same mean box at sampling temperatures $T\in\{0.25,0.50,0.75\}$, keeping the checkpoint, warm starts, budget, and acquisition unchanged.

\Cref{tab:search-temperature} reports the results.
Raising the temperature lowers the repetition rate, from $0.505$ and $0.452$ under greedy decoding to $0.234$ and $0.231$ at $T=0.75$.
Exact-match recovery is highest at $T=0$, with $6/15$ training targets and $8/15$ held-out targets.
At $T=0.50$, training recovery is $5/15$ and held-out recovery is $4/15$.
Held-out similarity is also highest at $T=0$.

\begin{table}[t]
\centering
\small
\setlength{\tabcolsep}{4pt}
\begin{tabular}{lcccccc}
\toprule
& \multicolumn{3}{c}{\textbf{Train}} & \multicolumn{3}{c}{\textbf{Held out}} \\
\cmidrule(lr){2-4} \cmidrule(lr){5-7}
\textbf{Temperature} & Exact$\uparrow$ & Sim.$\uparrow$ & Rep.$\downarrow$ & Exact$\uparrow$ & Sim.$\uparrow$ & Rep.$\downarrow$ \\
\midrule
$T=0$ & $6/15$ & $0.863$ & $0.505$ & $8/15$ & $0.892$ & $0.452$ \\
$T=0.25$ & $3/15$ & $0.830$ & $0.455$ & $5/15$ & $0.860$ & $0.412$ \\
$T=0.50$ & $5/15$ & $0.861$ & $0.333$ & $4/15$ & $0.839$ & $0.345$ \\
$T=0.75$ & $1/15$ & $0.807$ & $0.234$ & $3/15$ & $0.839$ & $0.231$ \\
\bottomrule
\end{tabular}
\caption{\textbf{Semantle search under decoding temperature.}
Exact matches, mean best semantic similarity, and repetition rate, using the protocol of \cref{tab:search-baselines} on the same checkpoint and the mean-box domain of \cref{eq:method-box}.
Repetition is defined as in \cref{tab:search-domain}.
$T=0$ is greedy decoding as in the primary runs.}
\label{tab:search-temperature}
\end{table}

\subsection{Effect of GP kernel on search}
\label{app:experiments-kernel}

The primary Semantle runs use an ARD Mat\'ern-$2.5$ covariance in the Gaussian-process surrogate.
We also compare Mat\'ern-$1.5$, Mat\'ern-$0.5$, and the squared-exponential (RBF) kernel on the same checkpoint, warm starts, budget, acquisition, and mean-box domain, changing only the kernel family.

\Cref{tab:search-kernel} reports the results.
Mat\'ern-$2.5$ recovers the most targets on both splits ($6/15$ training, $8/15$ held out).
The rougher Mat\'ern kernels and the smoother RBF kernel each reduce exact-match recovery, with the largest drop on held-out targets ($4/15$ for Mat\'ern-$0.5$ and RBF).
Mean best similarity follows the same ordering.
Under Mat\'ern-$0.5$ the repetition rate falls to $0.321$ on training targets and $0.365$ on held-out targets, and held-out exact-match recovery is lower than under Mat\'ern-$2.5$.
On this protocol, Mat\'ern-$2.5$ remains the strongest default among the kernels we tested.

\begin{table}[t]
\centering
\small
\setlength{\tabcolsep}{4pt}
\begin{tabular}{lcccccc}
\toprule
& \multicolumn{3}{c}{\textbf{Train}} & \multicolumn{3}{c}{\textbf{Held out}} \\
\cmidrule(lr){2-4} \cmidrule(lr){5-7}
\textbf{Kernel} & Exact$\uparrow$ & Sim.$\uparrow$ & Rep.$\downarrow$ & Exact$\uparrow$ & Sim.$\uparrow$ & Rep.$\downarrow$ \\
\midrule
Mat\'ern-$2.5$ & $6/15$ & $0.863$ & $0.505$ & $8/15$ & $0.892$ & $0.452$ \\
Mat\'ern-$1.5$ & $3/15$ & $0.834$ & $0.533$ & $6/15$ & $0.865$ & $0.454$ \\
Mat\'ern-$0.5$ & $5/15$ & $0.857$ & $0.321$ & $4/15$ & $0.857$ & $0.365$ \\
RBF & $4/15$ & $0.837$ & $0.493$ & $4/15$ & $0.832$ & $0.498$ \\
\bottomrule
\end{tabular}
\caption{\textbf{Semantle search under GP kernel.}
Exact matches, mean best semantic similarity, and repetition rate, using the protocol of \cref{tab:search-baselines} on the same checkpoint and the mean-box domain of \cref{eq:method-box}.
Repetition is defined as in \cref{tab:search-domain}.
All kernels use ARD lengthscales.
Mat\'ern-$2.5$ is the primary setting.}
\label{tab:search-kernel}
\end{table}

\subsection{Effect of acquisition function on search}
\label{app:experiments-acquisition}

The primary Semantle runs select the next code by log expected improvement.
We also compare upper confidence bound, with coefficient $\beta = 0.2$, and Thompson sampling on the same checkpoint, warm starts, budget, ARD Mat\'ern-$2.5$ kernel, and mean-box domain.

\Cref{tab:search-acquisition} reports the results.
Log expected improvement recovers the most targets on both splits ($6/15$ training, $8/15$ held out).
Upper confidence bound recovers $4/15$ and $6/15$.
Thompson sampling recovers $2/15$ and $3/15$, with repetition rates $0.201$ and $0.205$.
On this protocol, log expected improvement remains the strongest default among the acquisition functions we tested.

\begin{table}[t]
\centering
\small
\setlength{\tabcolsep}{4pt}
\begin{tabular}{lcccccc}
\toprule
& \multicolumn{3}{c}{\textbf{Train}} & \multicolumn{3}{c}{\textbf{Held out}} \\
\cmidrule(lr){2-4} \cmidrule(lr){5-7}
\textbf{Acquisition} & Exact$\uparrow$ & Sim.$\uparrow$ & Rep.$\downarrow$ & Exact$\uparrow$ & Sim.$\uparrow$ & Rep.$\downarrow$ \\
\midrule
LogEI & $6/15$ & $0.863$ & $0.505$ & $8/15$ & $0.892$ & $0.452$ \\
UCB & $4/15$ & $0.845$ & $0.461$ & $6/15$ & $0.865$ & $0.442$ \\
Thompson sampling & $2/15$ & $0.813$ & $0.201$ & $3/15$ & $0.826$ & $0.205$ \\
\bottomrule
\end{tabular}
\caption{\textbf{Semantle search under acquisition function.}
Exact matches, mean best semantic similarity, and repetition rate, using the protocol of \cref{tab:search-baselines} on the same checkpoint and the mean-box domain of \cref{eq:method-box}.
Repetition is defined as in \cref{tab:search-domain}.
LogEI is the primary setting.
UCB uses coefficient $\beta = 0.2$.}
\label{tab:search-acquisition}
\end{table}

\subsection{Training ablation numbers}
\label{app:experiments-ablations-table}

\Cref{fig:search-cosine} shows the anytime curves for the runs summarized on the right of \cref{fig:search-n}.
\Cref{tab:search-breadth} gives the train and held-out split.
Its interpolation column is the median distance defined in \cref{app:experiments-interp}: temperature-$1$ decodes along $40$ training-target pairs at $t\in\{0.4,0.5,0.6\}$, compared with an endpoint-only reference at $0.365$.
Similarity in that panel is the unweighted mean of the two split means below, and the exact-match label is their sum.

\begin{figure}[t]
\centering
\includegraphics[width=\linewidth]{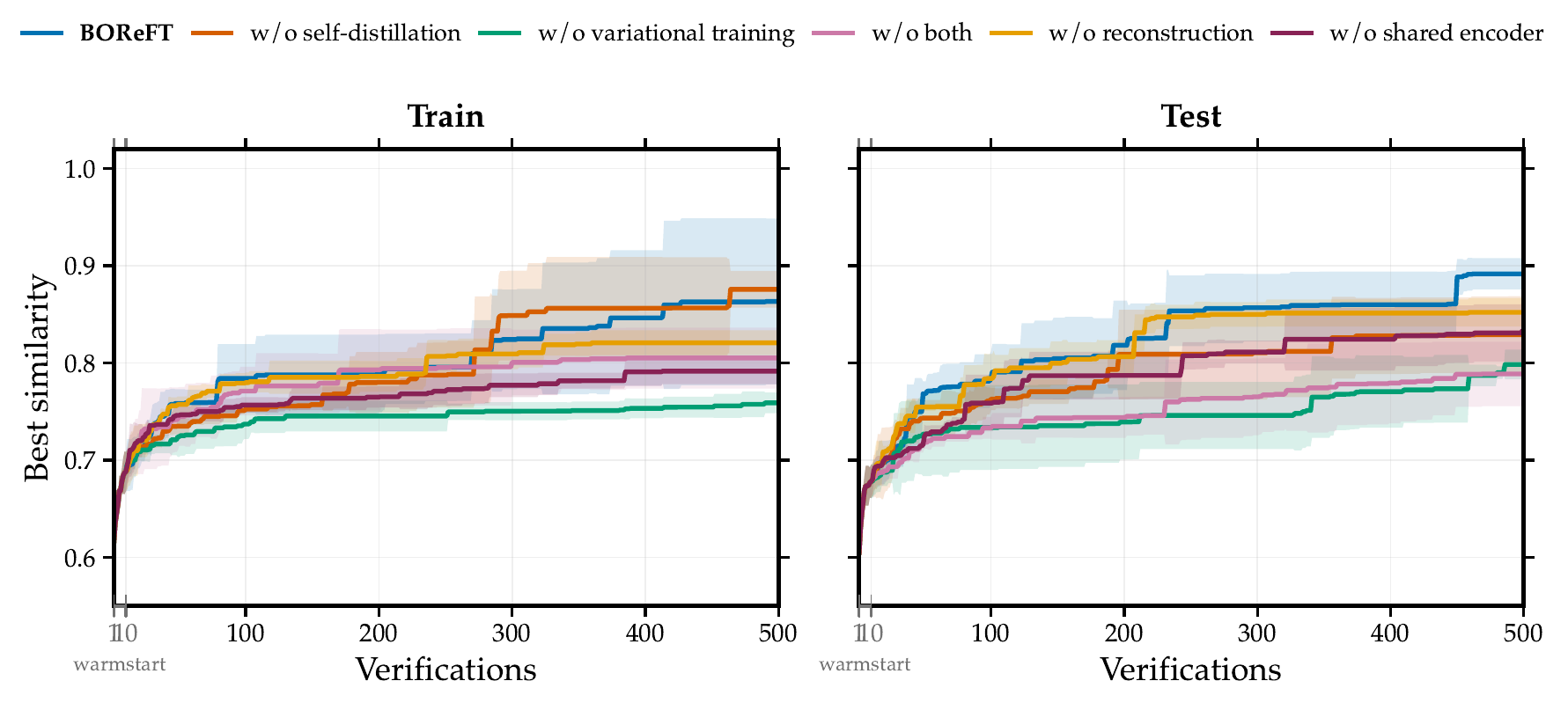}
\caption{\textbf{Search under training ablations.}
Best-so-far semantic similarity (mean $\pm$ 1 SD across 3 seeds) on five training and five held-out Semantle targets. The first ten objective evaluations are shared warm starts.}
\label{fig:search-cosine}
\end{figure}

\begin{table}[t]
\centering
\small
\setlength{\tabcolsep}{4pt}
\begin{tabular}{lccccc}
\toprule
& \multicolumn{2}{c}{\textbf{Train}} & \multicolumn{2}{c}{\textbf{Held out}} & \\
\cmidrule(lr){2-3} \cmidrule(lr){4-5}
\textbf{Method} & Exact$\uparrow$ & Sim.$\uparrow$ & Exact$\uparrow$ & Sim.$\uparrow$ & $\Delta_\text{interp}\downarrow$ \\
\midrule
BOReFT & $\mathbf{6/15}$ & $0.863$ & $\mathbf{8/15}$ & $\mathbf{0.892}$ & $\mathbf{0.338}$ \\
\quad \textit{w/o self-distillation} & $\mathbf{6/15}$ & $\mathbf{0.876}$ & $3/15$ & $0.829$ & $0.355$ \\
\quad \textit{w/o reconstruction} & $2/15$ & $0.821$ & $5/15$ & $0.852$ & $0.396$ \\
\quad \textit{w/o shared encoder} & $1/15$ & $0.792$ & $4/15$ & $0.833$ & $0.382$ \\
\quad \textit{w/o variational training} & $0/15$ & $0.759$ & $3/15$ & $0.799$ & $0.356$ \\
\bottomrule
\end{tabular}
\caption{\textbf{Search and interpolation in ablated spaces.}
Exact matches and mean best semantic similarity on Semantle, split by whether the hidden target was included in representation training (15 runs per split: 5 targets $\times$ 3 seeds). $\Delta_\text{interp}$ is the median distance from the average embedding of temperature-$1$ decodes to the semantic interpolant $x_\alpha$, over 40 target pairs at $t\in\{0.4,0.5,0.6\}$. An endpoint-only reference that always emits the nearer training word has $\Delta_\text{interp}=0.365$.}
\label{tab:search-breadth}
\end{table}

\subsection{Effect of intervention rank}
\label{app:experiments-rank}

The primary Semantle runs use intervention rank $r=64$.
We also vary the rank at fixed training-set size $N=3072$, keeping the rest of the protocol in \cref{sec:experiments-main} unchanged.
\Cref{fig:search-rank} reports mean best semantic similarity and exact-match rate, pooled over the same ten targets and three seeds as \cref{fig:search-n}.
Rank $64$ gives the strongest exact-match result ($14/30$).
Performance is not monotone in rank, with a pronounced dip at rank $32$.

\begin{figure}[t]
\centering
\includegraphics[width=0.72\linewidth]{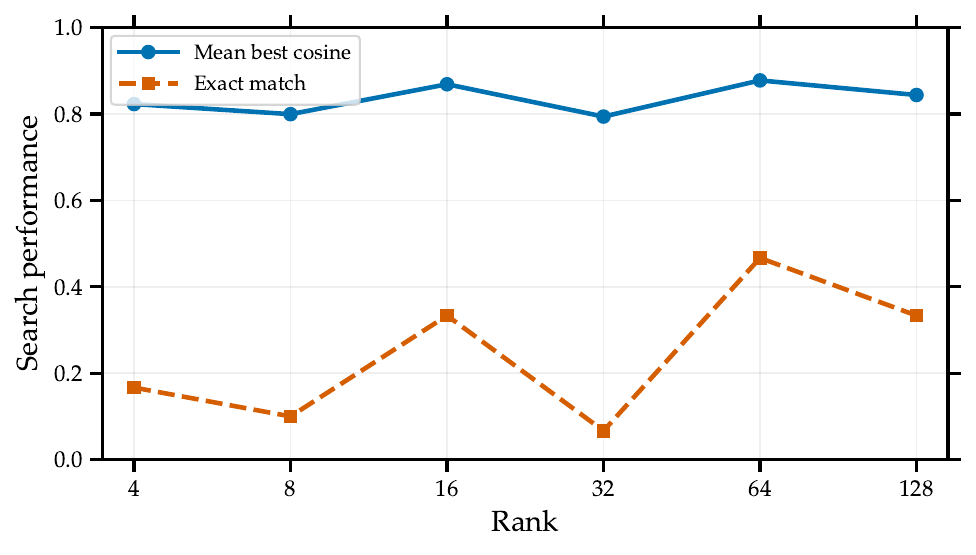}
\caption{\textbf{Search vs.\ intervention rank.}
Mean best semantic similarity (solid) and exact-match rate (dashed) at $N=3072$, pooled over 10 targets $\times$ 3 seeds.}
\label{fig:search-rank}
\end{figure}

\newpage
\section{Additional related work}
\label{app:related}

This section records comparisons that sit next to \cref{sec:related} but are not needed to follow it.

\paragraph{Test-time search and adaptation.}
OPRO conditions generation on scored candidates \citep{yang2024opro}, Tree of Thoughts searches intermediate states \citep{yao2023tree}, and AutoDiscovery searches a tree of hypotheses \citep{agarwal2025autodiscovery}.
Test-time training updates parameters at inference \citep{sun2020testtime,hardt2024test,akyurek2025surprising}.
MiGrATe uses mixed-policy GRPO \citep{phan2025migrate}, and SDPO distills from feedback on hard problems \citep{hubotter2026reinforcement}.
BOReFT learns its search domain before inference and then holds the generator and the domain fixed.

\paragraph{Verbalizable representations.}
Beyond sparse autoencoders, the Jacobian lens and its associated J-space identify representations that can be verbalized as text \citep{gurnee2026verbalizable}.
These results describe structure that is already present in a model's activations.
BOReFT does not try to recover that structure.
It learns a separate set of interventions from target solutions.

\paragraph{Monitoring and manifold steering.}
Representation Engineering uses population-level representations for monitoring as well as for manipulation \citep{zou2023repe}.
Manifold-steering methods go beyond linear directions by fitting activation manifolds and intervening along paths that respect their learned geometry \citep{wurgaft2026manifold}.
Both lines of work aim to describe or follow geometry that the model already has.
The BOReFT intervention is instead a learned search domain, and its coordinates are chosen by an external optimizer.

\end{document}